\DeclareMathOperator*{\Ber}{Ber}
\DeclareMathOperator*{\ucb}{ucb}
\begin{document}

\title{Cost-Efficient Online Hyperparameter Optimization}
\author{\name Jingkang Wang$^{*1,2}$ \email wangjk@cs.toronto.edu
  \\ \name Mengye Ren$^{*1,2}$ \email mren@cs.toronto.edu
  \\
  \name Ilija Bogunovic$^{3}$ \email ilijab@ethz.ch
  \\
  \name Yuwen Xiong$^{1,2}$ \email yuwen@cs.toronto.edu
  \\
  \name Raquel Urtasun$^{1,2}$ \email urtasun@cs.toronto.edu \\
  \addr 
  University of Toronto$^1$, Uber ATG$^2$,
  ETH Zürich$^3$ \\
}

\maketitle

\begin{abstract}%
Recent work on hyperparameters optimization (HPO) has shown the possibility of training certain hyperparameters together with regular parameters. However, these online HPO algorithms still require running evaluation on a set of validation examples at each training step, steeply increasing the training cost. To decide when to query the validation loss, we model online HPO as a time-varying Bayesian optimization problem, on top of which we propose a novel \textit{costly feedback} setting to capture the concept of the query cost. Under this setting, standard algorithms are cost-inefficient as they evaluate on the validation set at every round.
In contrast, the cost-efficient GP-UCB algorithm proposed in this paper queries the unknown function only when the model is less confident about current decisions. We evaluate our proposed algorithm by tuning hyperparameters online for VGG and ResNet on CIFAR-10 and ImageNet100. Our proposed online HPO algorithm reaches human expert-level performance within a single run of the experiment, while incurring only modest computational overhead compared to regular training.
\end{abstract}

\begin{keywords}
  Bayesian Optimization, Hyperparameter Tuning, Gaussian Process
\end{keywords}

\section{Introduction}

Training deep neural networks involves a large number of hyperparameters, many of them found through repetitive trial-and-error. Often the results are very sensitive to the selection of hyperparameters and researchers typically follow classic ``cookbooks'' with many of their hyperparameters copied from the predecessor models. Hyperparameter optimization (HPO)~\citep{Hutter11,SnoekRSKSSPPA15,JamiesonT16} could potentially save much time from grid search procedures performed in nested loops; however, repetitive experiments on the order of hundreds are still required, which makes applying HPO prohibitively expensive in practice.

Recent advances in meta-optimization~\citep{Lorraine18,MacKay19} have shown that one can actually tune certain hyperparameters, e.g., data augmentation, dropout, example weighting, entirely online throughout the training of the main model parameters, by constantly inspecting at the validation loss. While this means that the training job only needs to be launched once, the cost of each training step rises sharply: to learn the hyperparameters, one has to take a gradient step from the reward signals obtained by evaluating on a separate set of validation examples to the hyperparameters. Depending on the size of the validation set, this can make the training time several times longer.

Ultimately, the goal is to design an online HPO algorithm that is efficient in terms of computation cost that arises from validation loss evaluations, i.e., whenever the algorithm queries for ``ground-truth" of validation loss.  
Towards building such an algorithm, we model the environment as a time-varying Bayesian Optimization (BO) problem with unknown time-varying reward/objective. In contrast with the standard BO setting, we require the agent to pay a certain cost to observe the reward every time it decides to query the unknown function. 
In this paper, we propose a time-varying cost-efficient GP-UCB~\citep{Srinivas09,Bogunovic16} algorithm, and provide kernel-based regret guarantees. Our algorithm makes use of a Gaussian process (GP) model to quantify uncertainty in the estimates and cost-efficient query rule. Based on this rule, the algorithm queries the unknown objective only in rounds when it is "less" confident in its decision.

We verify the effectiveness of our proposed algorithm empirically by tuning hyperparameters of large scale deep networks. First, we automatically adjust the tuning schedules of self-tuning networks~\citep{MacKay19} on CIFAR-10~\citep{cifar10}. Second, we optimize data augmentation parameters of a state-of-the-art unsupervised contrastive representation learning algorithm~\citep{Chen20simclr} on ImageNet100~\citep{imagenet}. We show that with modest computation overhead compared to regular training, one can achieve the same or better performance compared to baselines that have been extensively tuned by human experts.

To summarize, the contributions in this paper are as follows: (1) We introduce a novel \textit{costly feedback} setting for BO to model the evaluation cost for online HPO; (2) We propose a cost-efficient GP-UCB algorithm with kernel-based theoretical guarantees for time-varying BO with costly feedback; (3) We demonstrate the effectiveness of our method in tuning hyperparameters for modern deep networks in two challenging tasks.

\section{Related Work}
\paragraph{Bayesian optimization (BO):} BO is a popular framework for optimizing an unknown objective function from point queries that are assumed to be costly \citep{shahriari2015taking}. Besides the standard problem formulation \citep{Srinivas09}, different works have considered various  settings including contextual setting \citep{KrauseO11}, batch and parallel setting \citep{Desautels14}, optimization under budget constraints \citep{LamWW16} and robust optimization \citep{bogunovic2018adversarially}. %
Of particular interest to our work are time-varying BO methods \citep{Bogunovic16,Imamura20} that consider objectives that vary with time. In~\cite{Bogunovic16}, the authors consider time variations that follow a simple Markov model. Their proposed algorithm TV-GP-UCB comes with the model-based forgetting mechanism and attains regret bounds that jointly depend on the time horizon and rate of function variation. In this work, we consider the same time-varying model %
but in the setting of sparse observations. That is, while  previous works have focused on the setting where observations are received at every time step, our goal is to optimize an unknown time-varying objective subject to a specified budget constraint. %

\paragraph{Hyperparameter Optimization (HPO):} There are three mainstream frameworks in HPO: model-free, model-based and gradient-based approaches.  Specifically, grid search, random search~\citep{bergstra2012random} and its extensions (e.g., successive halving~\citep{JamiesonT16} and Hyperband~\citep{LiJDRT17}) are standard model-free HPO methods that ignore the structure of the problem and model. 
In contrast, BO~\citep{Hutter11,BergstraBBK11,SnoekLA12,SnoekRSKSSPPA15} is a common model-based approach that aims to model the conditional probability of performance given the hyperparameters and a dataset. Other assumptions such as learning curve behavior~\citep{SwerskySA14,KleinFSH17,Nguyen19bayesian} and computational cost~\citep{KleinFBHH17} are taken into account in BO to avoid learning from scratch every time. However, both model-free and model-based approaches involve repetitive trial-and-error processes that are very expensive in practice. An alternative gradient-based solution is to cast HPO as a bilevel optimization problem and take gradients with respect to the hyperparameters. Since unrolling the whole learning trajectories is prohibitively expensive, researchers usually consider a biased several-step look-ahead approximation~\citep{Domke12,LuketinaRBG16,FranceschiFSGP18} or the implicit function theorem~\citep{larsen1996design,Pedregosa16,Lorraine2019,Bertrand20implicit} to obtain the gradients. To further improve the efficiency of HPO, recent works~\citep{Lorraine18,MacKay19} utilize hypernetworks~\citep{hypernetwork} to approximate the inner optimization loop and a held-out validation set to collect reward signals. Instead of constantly evaluating the validation loss, which brings remarkable computation overhead, this work propose a cost-efficient evaluation rule, and emprirically demonstrates it effectiveness in the real HPO tasks.

\section{Bayesian Optimization with Costly Feedback}
Recent online HPO algorithms require to obtain reward signals constantly by evaluating the metrics such as performance gain on the validation set, drastically increasing the training cost. Motivated by this observation, 
we model HPO as a time-varying Bayesian optimization (BO) problem where the unknown function is treated as the reward signals obtained through subsequent evaluations. To capture the evaluation cost induced by observing the rewards, we introduce a novel \textit{costly feedback} setting that allows the agent to decide, at every round, whether to receive the observation. In turn, the agent is required to pay a certain cost whenever it receives feedback. %

\subsection{Problem Setup}

We aim to sequentially optimize an unknown objective function $f_t(x): \mathcal{D} \times
\mathcal{T} \rightarrow \mathbb{R}$ defined on composite space $\mathcal{X} = \mathcal{D} \times
\mathcal{T}$, where  $\mathcal{D}$ is the finite input domain%
and $\mathcal{T} = \{1, 2, \dots, T\}$ represents the time domain. At each round $1 \leq t \leq T, t \in \mathbb N $, the agent decides upon a data point $x_t$.
After selecting the point $x_t$, it has the option to decide whether to receive the feedback by interacting with $f_t$. If the agent chooses to observe the
feedback, then it receives a noisy observation $y_t = f_t(x) + z_t$, where $z_t$ is assumed to be independently sampled from a Gaussian distribution $\mathcal{N}(0, \sigma^2)$. %
Let $\mathcal{S}_{t}^n = \{(x_{h(i)}, y_{h(i)}, i)\}_{i=1}^{n}$ denote the data obtained through $n$ observations till round $t$. %
Here $h(i)$ denotes the mapping function that records the round at which the agent observes the $i$-th data point. The agent then chooses its next point $x_{t+1}$ based on the previously collected data $\mathcal{S}_{t}^{n}$.
If the agent decides to query the unknown function at round $t$, then it needs to pay a cost $c_t$.
We let $c_t = 1$ if the feedback $y_t$ is received at round $t$, and  otherwise $c_t = 0$. We define the \textit{cumulative cost} as $C_T = \sum_{t=1}^T c_t$ that records the total number of queries within $T$ rounds. %

\paragraph{Connection to online HPO:} In this paper, online HPO is modeled as the time-varying BO with costly feedback -- hyperparameter configurations $x_t$ are selected sequentially to maximize the reward signals $f_t(x)$. Since evaluation on the validation set is expensive, the agent is allowed to skip the evaluation or pay a certain cost $c_t$ to see the reward.

To measure the performance of algorithms, the regret for $t$-th round is defined as $r_t = \max_{x \in \mathcal{D}} f_t(x) - f_t(x_t)$. The
\textit{cumulative regret} $R_T$ is the sum of instantaneous regrets  $R_T = \sum_{t=1}^{T} r_t$. Furthermore, we define the \textit{cumulative loss} as $L_T = R_T + C_T$ to balance between regret and cost.
We note that if an agent queries the unknown objective function at each time step, we recover the standard time-varying BO setting \cite{Bogunovic16}. %

\section{Cost-Efficient GP-UCB Algorithm}
Since the reward signals for online HPO vary as the learning proceeds, and similar hyperparameter configurations lead to similar performance, we consider using a time-varying Gaussian process (GP) to model the objective function $f_t$. Consequently, we start this section by studying the behavior of time-varying GP models with sparse observations. We then propose a novel cost-efficient algorithm and study its theoretical performance. We defer all the analysis to the supplementary material.

\subsection{Time-varying Gaussian Process Model}
\label{sec:gp_costly}

We use a spatio-temporal GP to model the underlying function $f_t$. The smoothness properties of $f_t$ are depicted by a composite kernel function~\citep{Bogunovic16,Imamura20}: $k=k_{\mathrm {space}} \otimes k_{\mathrm{time}}$, where $k_{\mathrm{space}} : \mathcal{D}
\times \mathcal{D} \rightarrow \mathbb R^{+}$ and $k_{\mathrm{time}} : \mathcal{T} \times
\mathcal{T} \rightarrow \mathbb R^{+}$ are spatial and temporal kernels; $(k_{\mathrm {space}} \otimes k_{\mathrm{time}})((x,t), (x',t')) := k_{\mathrm {space}}(x,x')\cdot k_{\mathrm{time}}(t,t')$.
Following~\cite{Bogunovic16}, we consider the following transition relation of the underlying function: $f_{t+1}(x) = \sqrt{1 - \epsilon} f_{t}(x) + \sqrt{\epsilon} g_{t+1}(x)$, where $g_t$ is sampled from a GP with a mean function $\mu$ and a kernel function $k$, i.e., $g_t
\sim \mathcal{G} \mathcal{P}(\mu, k)$\footnote{Without loss of generality, as in~\cite{Srinivas09}, we assume $\mu = 0$ for GPs not conditioned on data.}; $\epsilon \in[0, 1]$ is the forgetting rate that constrains the variation of the function.
This particular Markov model maps to the following exponential temporal kernel as shown in~\cite{Bogunovic16}: %
$
k_{\mathrm{time}}\left(\tau, \tau^{\prime}\right)=(1-\epsilon)^{\frac{\left|\tau-\tau^{\prime}\right|}{2}}.
$

Given observed data points $\mathbf x_t^n = \{x_{h(1)}, \dots, x_{h(n)}\}$ and $\mathbf y_t^n =
\{y_{h(1)}, \dots, y_{h(n)}\}$, the posterior over $f$ is a GP with mean $\tilde{\mu}_t(x)$,
covariance $k_t (x, x^{\prime})$ and variance $\tilde{\sigma}_t^2 (x) = k_t (x, x)$:
\begin{small}
\begin{align}
\tilde{\mu}_{t}(x) &=\widetilde{\mathbf{k}}_{t}^n(x)^{T}\left(\widetilde{\mathbf{K}}_{t}^n+\sigma^{2} \mathbf{I}_{t}^n\right)^{-1} \mathbf{y}_{t}^n, \label{eq:posterior_mean}\\
k_t(x, x^{\prime}) &=k(x, x^{\prime})-\widetilde{\mathbf{k}}_{t}^n(x)^{T}\left(\widetilde{\mathbf{K}}_{t}^n+\sigma^{2} \mathbf{I}_{t}^n\right)^{-1} \widetilde{\mathbf{k}}_{t}^n(x^{\prime}), %
\label{eq:posterior_var}
\end{align}
\vspace{-0.05in}
\end{small}
where $\widetilde{\mathbf{K}}_{t}^n=\mathbf{K}_{t}^n \circ \mathbf{D}_{t}^n$, with $\mathbf{D}_{t}^n
=\left[(1-\epsilon)^{|h(i)-h(j)| / 2}\right]_{i, j=1}^{n}$, and $
\tilde{\mathbf{k}}_{t}^n(x)=\mathbf{k}_{t}^n(x)  \circ  \mathbf{d}_{t}^n $ with $ \mathbf{d}_{t}^n
=$ $\left[(1-\epsilon)^{(t+1-h(i)) / 2}\right]_{i=1}^{t}$. Here  $\mathbf{K}_{t}^n=$ $\left[k\left(x_i,
x_j\right)\right]^{n}_{i,j=1}$,  $\mathbf{k}_{t}^n(x)=\left[k\left(x_{i},
x\right)\right]_{i=1}^{n}$,  $\circ$ is the Hadamard product, and $\mathbf{I}_{t}^n$ is the $n
\times n$ identity matrix.

One key challenge in BO is to balance the exploration-exploitation tradeoffs rigorously. Various works (e.g.,~\cite{Srinivas09,Bogunovic16}) have focused on the \textit{upper-confidence bound} (UCB) rule that selects points by maximizing a linear combination of posterior mean and variance. 
In this paper, our focus is also on the UCB rule due to its strong theoretical guarantees and empirical performance. In what follows, we let $\ucb_t(x) := \tilde{\mu}_t(x) + \beta_{t+1}^{1/2}\tilde{\sigma}_t(x)$, where $\lbrace \beta_t\rbrace_{t=1}^T$, each $\beta_t \in \mathbb{R_{+}}$, is a non-decreasing sequence of exploration parameters, selected as~\cite{Bogunovic16}, such that (w.h.p.) $\ucb_t(x)$ is a valid upper confidence bound on $f_t(x)$ for every $x$ and $t$.

\subsection{Strategies for Querying Feedback}
\label{sec:strategies}
In BO with costly feedback, there are two decisions for the agent to make at every round: 1) where to evaluate the unknown objective; 2) whether to receive the feedback (and consequently incur cost). In this section, we focus on the latter problem. 
To minimize regret while reducing the query cost, we %
propose two strategies for the query allocation.

\vspace{-0.05in}
\paragraph{Querying with Bernoulli Sampling Schedule:}
\label{sec:uniform_tvgp}
A simple and model-agnostic method for our problem is to query the objective function according to a fixed probability. Specifically, we query the feedback based on a Bernoulli random variable $\Ber(B/T)$, which guarantees that the algorithm receives in expectation $B$ observations of $y_t$.

\paragraph{Leveraging Uncertainty Information from GP:} 
Although the Bernoulli strategy defined above can reduce the query cost, it does not leverage any knowledge from the GP model. As a consequence, in practice, there is often a significant performance loss compared to standard algorithms with full observations. To overcome this difficulty, we propose a cost-efficient query rule that automatically assesses the uncertainty of the current decision for time-varying GP-UCB~\citep{Bogunovic16} and its variants. 

On a high level, the agent aims to maintain informative queries but skip uninformative ones to save the query cost. %
Hence, we consider the following cost-efficient query rule: the agent only queries the feedback when the following condition satisfies: $$\mathbb P \left(\hat{y}_t({x_t}) > \hat{y}_t({x}) \right) < \kappa, \ \exists x \in \mathcal{D}\setminus \{x_t\},$$
where $\hat{y}_t(x)$ is the posterior predictive estimation of the unknown objective at point $x$, which is a random variable distributed according to $\mathcal{N}\left(\tilde{\mu}_t(x), \tilde\sigma^2_t(x)\right)$. The analytic solution to $\mathbb P \left(\hat{y}_t({x_t}) > \hat{y}_t({x}) \right) < \kappa$, as well as the analysis of the choices of $\kappa$ are provided in Appendix~\ref{supp:kappa_choice}. This rule explicitly encourages receiving feedback when the agent is less confident in distinguishing the best candidate. The intuition behind this rule is to skip querying $f_t$ when the selected $x_t$ leads to ``small'' regret.

\subsection{Cost-Efficient GP-UCB Algorithm}
\label{sec: ce-gp-ucb}
We integrate the two strategies for querying feedback in time-varying GP-UCB model and obtain two algorithms for our costly feedback setting. Specifically, we give the cost-efficient algorithm  (CE-GP-UCB) that leverages the uncertainty information from GP in Algorithm~\ref{alg:ca-gp-ucb}. Our algorithm selects the points with largest UCB at every round (Line~\ref{alg:line3}) but only observes the feedback $y_t$ when the when the model is most uncertain
(Line~\ref{alg:line8}). Note that the query strategy can be replaced by Bernoulli sampling strategy to consume the budget $B$ as introduced in Sec~\ref{sec:strategies}. %

When no feedback is received (Line~\ref{alg:line10}), there is no update of the GP model; however, due to the time-varying nature, the model will be less confident about the estimation for the candidates thus the posterior variance will increase. 

\paragraph{Remark:} Note that if the candidates are close to each other (e.g., quantized  candidates  for  continuous  variables),  the  model  will  never  be  confident  in  its current decision since the UCB of best and second-best candidates are very close.  In other words, the cost-efficient query rule (Line 4 in Algorithm 1) will continuously be activated. However it is less informative to query the feedback for  two close candidates within one mode. Therefore, in practice, we  first find the local optima of the UCB function and their corresponding data points, then deploy the cost-efficient query rule for these local optima.

\begin{algorithm}[t]
\caption{CE-GP-UCB}\label{alg:ca-gp-ucb}
\begin{small}
\begin{algorithmic}[1]
\Require Input Domain $\mathcal{D}$, total rounds $T$, GP prior $\left(\tilde{\mu}_{0}, \tilde{\sigma}_{0}, k\right)$, forgetting rate $\epsilon$, 
user-specific confidence threshold $\kappa$
\State Initialize observation set $\mathcal{S}_{0}^{0} = \emptyset$, number of interactions $n = 0$.
\For {$t = 1, 2, \cdots, T$} 
\State Choose $x_{t}=\arg \max _{x \in \mathcal{D}} \tilde{\mu}_{t-1}(x)+\sqrt{\beta_{t}} \tilde{\sigma}_{t-1}(x)$ \Comment{Select points according to UCB rule} \label{alg:line3}
\If {$\mathbb P \left(\hat{y}_t({x_t}) >  \hat{y}_t({x})\right) < \kappa $, $\exists x \in \mathcal{D} \setminus \{x_t\}$}
\Comment{Cost-efficient query strategy} \label{alg:line8}
\State Receive feedback $y_{t}=f_{t}\left(x_{t}\right)+z_{t}$, $n\leftarrow n+1$ \label{alg:line5} {\Comment{Interact with $f_t$ and receive feedback}}
\State Add $(x_t, y_t, t)$ to observation set:
$\mathcal{S}_{t}^{n+1} = \mathcal{S}_{t-1}^{n} \cup {(x_t, y_t, t)} $
\State Perform Bayesian update to obtain $\tilde{\mu}_{t}$ and $\tilde{\sigma}_{t}$ \label{alg:line7}
\Else {\ $\tilde{\mu}_{t} = \tilde{\mu}_{t-1}, \tilde{\sigma}_{t} = \tilde{\sigma}_{t-1}$, $\mathcal{S}_{t}^n = \mathcal{S}_{t-1}^{n}$} \Comment{Do not observe feedback} \label{alg:line10}
\EndIf
\EndFor
\end{algorithmic}
\end{small}
\end{algorithm}

\section{Experiments}
\label{sec:exp}
We thoroughly evaluate the proposed algorithm on both synthetic data and practical online hyperparameter optimization problems: (1) tuning schedules for self-tuning networks (STN~\cite{MacKay19}); (2) data augmentations for unsupervised representation learning~\citep{Chen20simclr}.
Extensive experiments show that cost-efficient query rule leads to substantial improvements over simple Bernoulli strategy. %
We leave extra experimental details in Appendix~\ref{supp:exp}.

\vspace{-0.05in}
\subsection{Evaluation on Synthetic Data}

We follow the same synthetic setting as previous study~\cite{Bogunovic16}. Specifically, we used a one-dimensional input domain $\mathcal{D} = [0, 1]$
and quantized it into 1,000 uniformly divided points. We generated the time-varying objective functions according to the following Markov model $f_{t+1}(x)=\sqrt{1-\epsilon} f_{t}(x)+\sqrt{\epsilon} g_{t+1}(x)$ under different forgetting rate $\epsilon$, where $g_t(x) \sim \mathcal{GP}(0, k)$. We use Matérn3/2 kernel for GP models and set sampling noise variance $\sigma^2 = 0.01$.  The time horizon $T$ is set as 500. %

Figure~\ref{fig:bo_synthetic} shows the trade-off curves between average regret $R_T/T$ and query cost $C_T$ for different algorithms, where the performance is averaged over 50 independent trials. Specifically, we consider the TV-GP-UCB~\citep{Bogunovic16}, R-GP-UCB~\citep{Bogunovic16}, and its variants with other popular acquisition functions (PI: probability of improvement; EI: expected improvement) with Bernoulli query policy.
We observe that our method suffers from a minor regret loss when using 50\% query cost ($C_T = 250$), whereas other baselines with Bernoulli strategy lead to a larger performance loss. We then visualize the final GP models in Figure~\ref{fig:tvgp_synthetic} ($\epsilon = 0.03$). In particular, CE-GP-UCB maintains a similar posterior mean and variance for the unknown function compared with TV-GP-UCB with full observations and skips the queries when chosen candidates are close to the optimal ones. By contrast, TV-GP-UCB with Bernoulli strategy skips the queries even when the selected points are far from the optimal ones, thus leading a larger regret.

\begin{figure}[t]
    \centering
    \includegraphics[width=\textwidth]{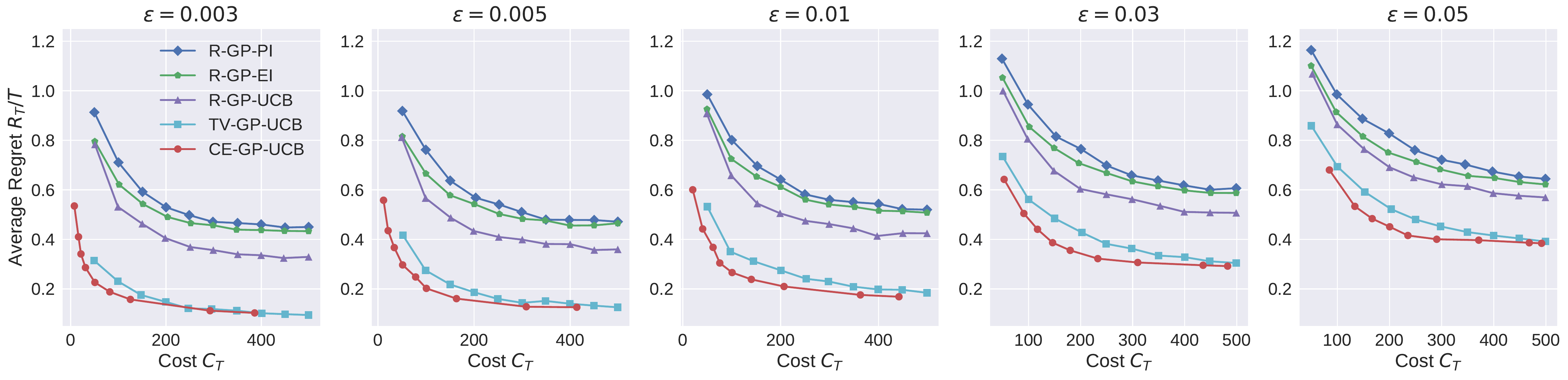}
    \vspace{-0.25in}
    \caption{Comparison of trade-off between average regret $R_T/T$ and query cost $C_T$.}
    \label{fig:bo_synthetic}
\end{figure}

\begin{figure}[t]
\centering
\begin{subfigure}[b]{.24\textwidth}
    \centering
    \includegraphics[width=\textwidth]{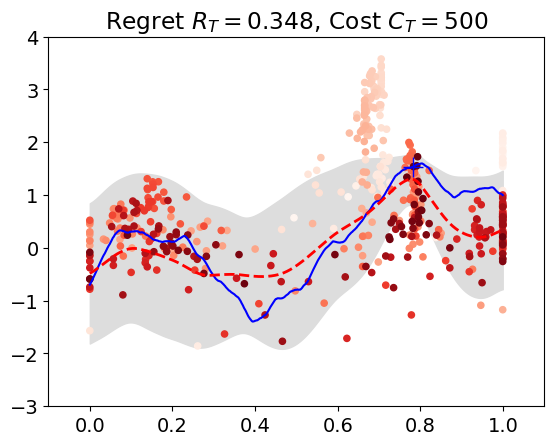}
    \vspace{-5mm}
    \caption{\texttt{TV-GP-UCB}}
    \label{fig:tv_gp_ucb}
\end{subfigure}
\begin{subfigure}[b]{.24\textwidth}
    \centering
    \includegraphics[width=\textwidth]{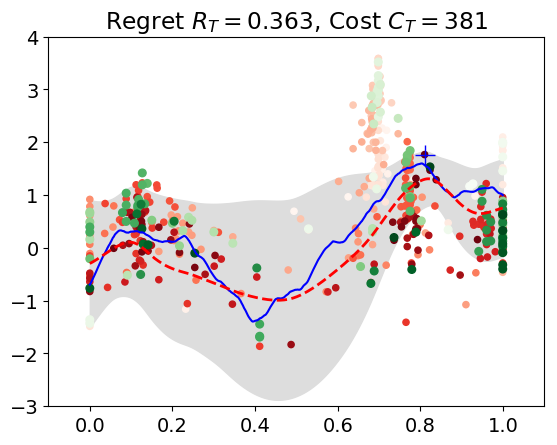}
    \vspace{-5mm}
    \caption{\texttt{TV-GP-UCB} Ber(0.8)}
    \label{fig:ca_gp_ucb_0.9}
\end{subfigure}
\begin{subfigure}[b]{.24\textwidth}
    \centering
    \includegraphics[width=\textwidth]{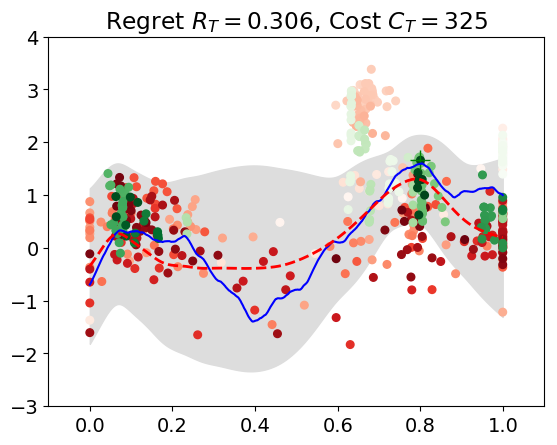}
    \vspace{-5mm}
    \caption{ours ($\kappa=0.90$)}
    \label{fig:ca_gp_ucb_0.90}
\end{subfigure}
\begin{subfigure}[b]{.24\textwidth}
    \centering
    \includegraphics[width=\textwidth]{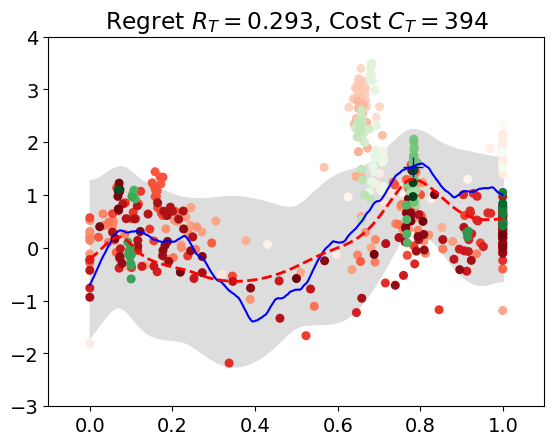}
    \vspace{-5mm}
    \caption{ours ($\kappa=0.95$)}
    \label{fig:ca_gp_ucb_*}
\end{subfigure}
\vspace{-0.05in}
\caption{
Visualizations of GP models. The \textcolor{purple}{\textbf{purple points}} and \textcolor{ForestGreen}{\textbf{green points}} denote the agents' choices to receive the feedback or not. Data points with darker color mean that they are visited more recently. The \textcolor{Blue}{\textbf{blue line}} (\textcolor{Blue}{\textbf{---}}) and \textcolor{red}{\textbf{dashed red line}} (\textcolor{red}{\textbf{-}}\textcolor{red}{\textbf{-}}\textcolor{red}{\textbf{-}}) indicate the ground truth of unknown objective function and the posterior mean. The \textcolor{gray}{\textbf{gray area}} denotes the confidence area. %
}
\label{fig:tvgp_synthetic}
\end{figure}

\subsection{Self-Tuning Networks} 
To further study the effectiveness of our algorithm in real applications, we first evaluate the proposed method in adjusting the tuning schedule for self-tuning networks~\citep{Lorraine18} (STN). STN is an online HPO algorithm that uses a hypernetwork~\citep{hypernetwork} to approximate the inner loop of bilevel optimization. Standard STN tunes hyperparameters (e.g., dropout, weight-decay and data augmentation) by alternating the following two steps (i.e., train/val=1:1): (1) (\textit{training phase}) train one mini-batch data on training set; (2) (\textit{tuning phase}) evaluate one mini-batch data on held-out validation set then backpropagate gradients through the hypernetwork to update hyperparameters. However, we found that STN is sensitive to different tuning schedules and standard ``dense  tuning'' is expensive and sub-optimal.

\begin{wrapfigure}{r}{0.53\textwidth}
\vspace{-8mm}
\begin{minipage}{0.53\textwidth}
\begin{table}[H]
\centering
\caption{Comparison with conventional MAB algorithms for STN (VGG16 on CIFAR-10). %
}
\label{tab:stn_vs_mab}
\resizebox{\textwidth}{!}{
\begin{tabular}{@{}llccccc@{}}
\toprule
\multicolumn{2}{l}{} & $\ell_{val}$ & Acc$_{val}$ & $\ell_{test}$ & Acc$_{test}$ & $T_{total}$ \\ \midrule
\multicolumn{2}{c}{Grid Search} & 0.421 & 0.887 & 0.444 & 0.879 & 11.69 $\times$ \\ \midrule
\multirow{5}{*}{\ Ber(0.1)} & EXP3.R & 0.466 & 0.841 & 0.479 & 0.835 & 0.93 $\times$ \\
 & $\epsilon$-greedy & 0.419 & 0.864 & 0.438 & 0.859 & 0.92 $\times$ \\
 & Softmax & 0.433 & 0.854 & 0.459 & 0.846 & 0.93 $\times$ \\
 & UCB & 0.447 & 0.869 & 0.451 & 0.867 & 0.94 $\times$ \\ 
 & GP-UCB & 0.422 & 0.877 & 0.449 & 0.868 & 0.96 $\times$ \\  \midrule
\multirow{5}{*}{\ Ber(0.2)}  & EXP3.R & 0.446 & 0.853 & 0.455 & 0.847 & 1.13 $\times$ \\
 & $\epsilon$-greedy & 0.401 & 0.881 & 0.443 & 0.873 & 1.13 $\times$ \\
 & Softmax & 0.449 & 0.862 & 0.480 & 0.852 & 1.13 $\times$ \\

 & UCB & 0.398 & 0.878 & 0.421 & 0.871 & 1.13 $\times$ \\ 
 & GP-UCB & 0.413 & 0.880 & 0.439 & 0.870 & 1.17 $\times$ \\
 \midrule
\multicolumn{2}{l}{CE-GP-UCB ($\kappa = 0.7$)} & 0.390 & 0.888 & 0.428 & 0.881 & 1.10 $\times$ \\
\multicolumn{2}{l}{CE-GP-UCB ($\kappa = 0.8$)} & 0.373 & 0.892 & 0.404 & 0.881 & 1.21 $\times$ \\ \bottomrule
\end{tabular}
}
\vspace{-3mm}
\end{table}
\end{minipage}
\end{wrapfigure}

As a consequence, we model STN as a two-armed bandits: ``\textit{training only}'' and ``\textit{tuning + training}''. Since tuning one mini-batch data is biased, we define the unknown objective function as the accuracy gain on the whole validation dataset, which is expensive to obtain thus leading a large query cost. 
We evaluate the proposed approach and other TV bandit algorithms for VGG16 on CIFAR-10. Specifically, we randomly choose 10,000 training images as the validation set, and tune layerwise dropout and data augmentation parameters, following~\cite{Lorraine18}. The network is trained 150 epochs with an initial learning rate 0.1. The learning rate is decayed at 60, 100 and 120 epochs with a decay rate of 0.1.

\begin{figure}[t]
\centering
\begin{subfigure}[b]{.29\textwidth}
    \centering
    \includegraphics[width=\textwidth]{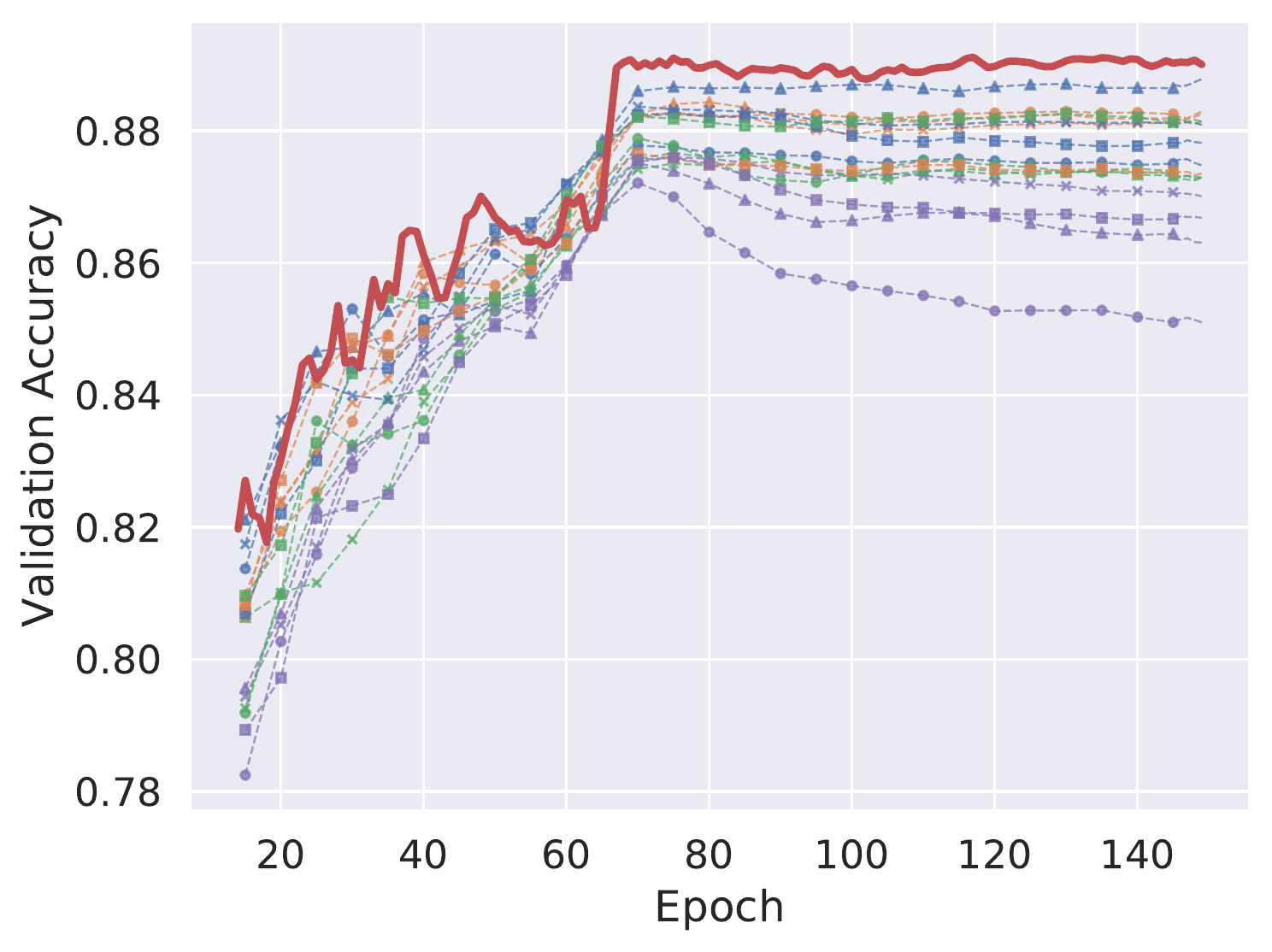}
    \vspace{-5mm}
    \label{fig:stn_val_acc}
\end{subfigure}
\hspace{-0.05in}
\begin{subfigure}[b]{.39\textwidth}
    \centering
    \includegraphics[width=\textwidth]{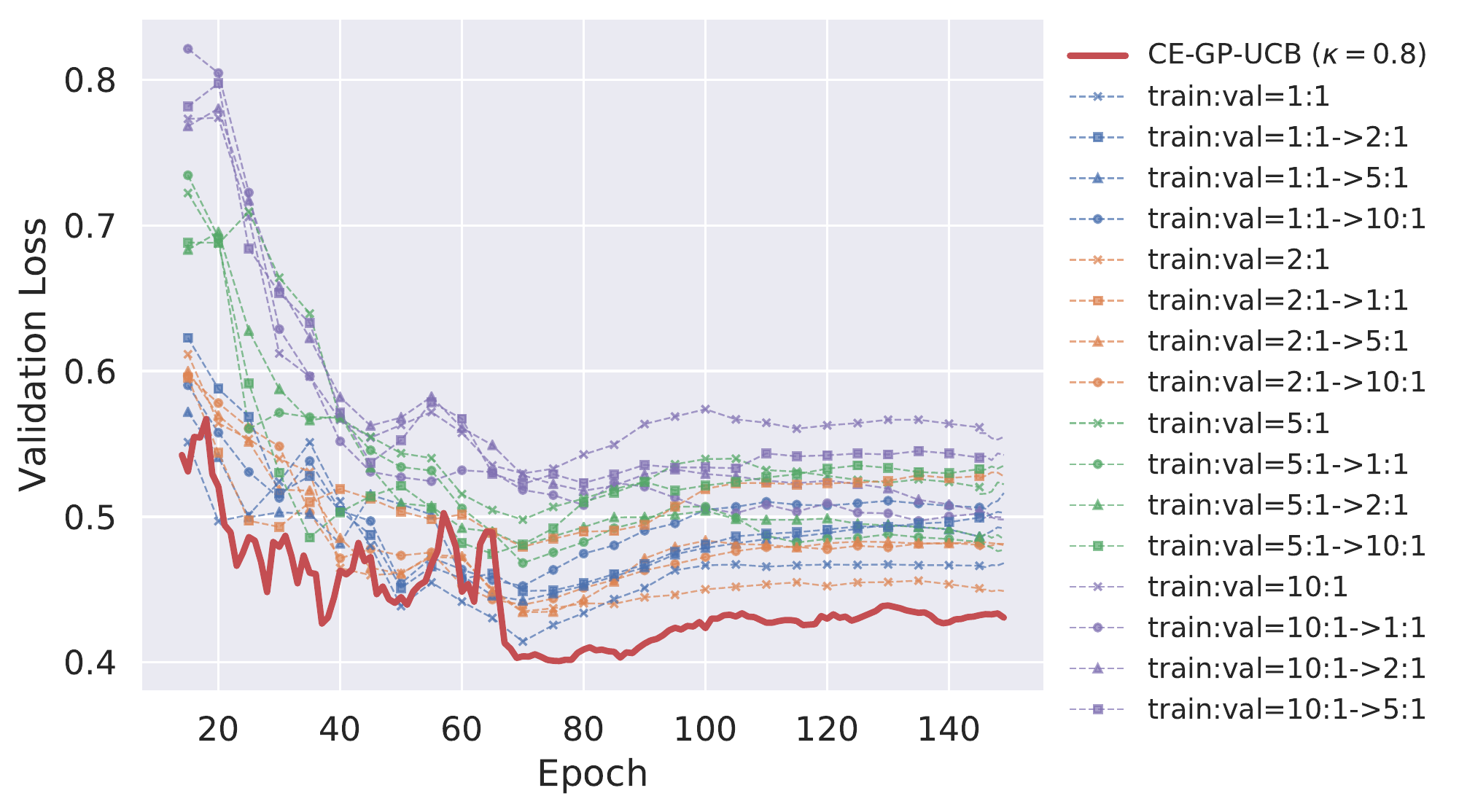}
    \vspace{-5mm}
    \label{fig:stn_val_loss}
\end{subfigure}
\begin{subfigure}[b]{.295\textwidth}
    \centering
    \includegraphics[width=\textwidth]{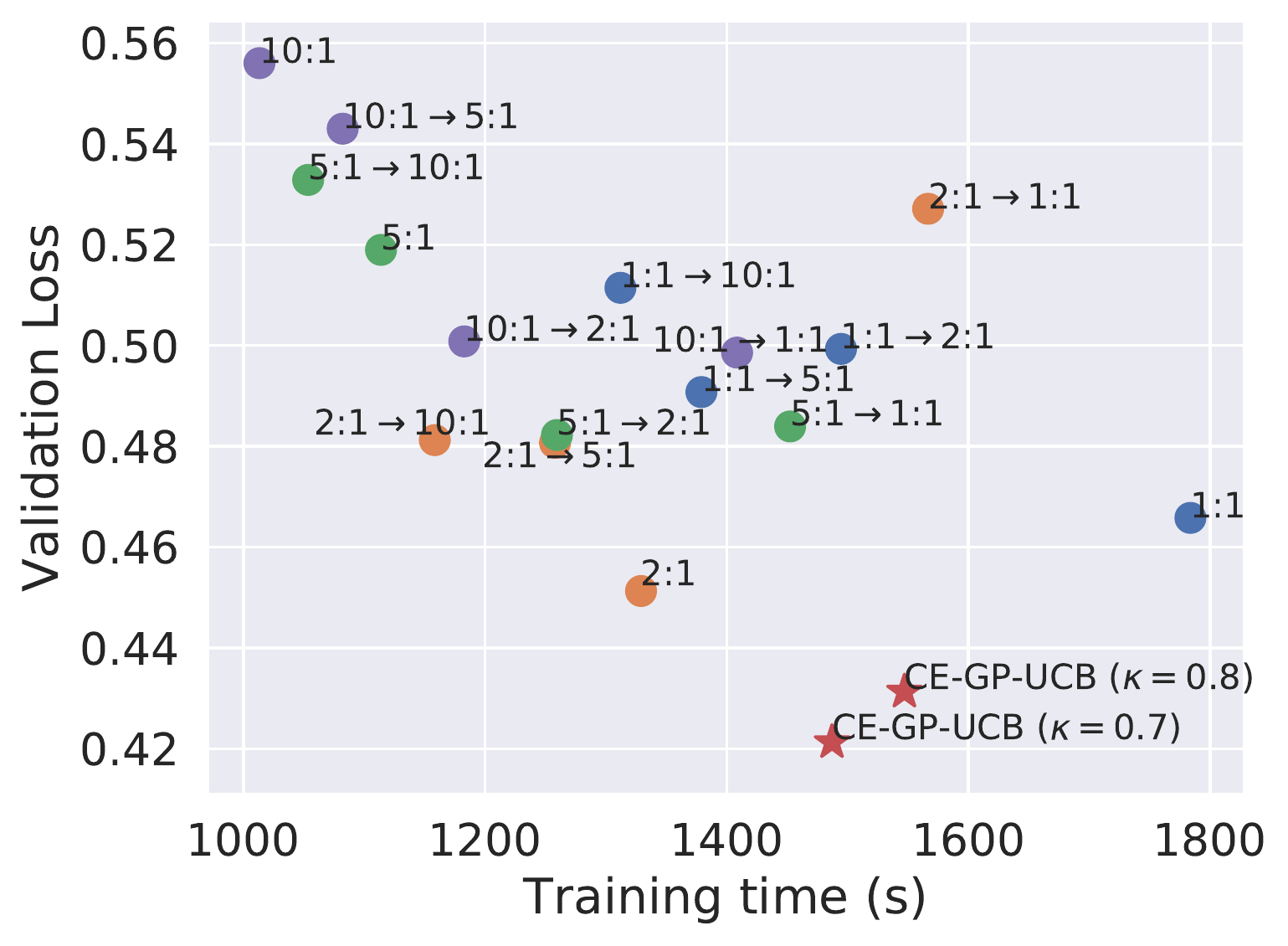}
    \vspace{-5.5mm}
    \label{fig:time_vs_loss}
\end{subfigure}
\vspace{-0.05in}
\caption{(a) Validation accuracy and \& (b)  validation loss for STN with different tuning schedules. (c) Trade-offs between validation loss and corresponding training time (excluding querying cost).}
\label{fig:stn_learning_curves}
\end{figure}

We first consider two baselines including static schedules~\citep{MacKay19} and a simple switching heuristics (e.g., train:val=1:1$\rightarrow$2:1) that occurs at 100 epochs in which the learning rate is decayed. Since smaller learning rate is easier to cause overfitting, we expect to find a better schedule by also decaying the tuning frequency at the late training stage. The dashed lines in Figure~\ref{fig:stn_learning_curves} (a) \& (b) show that the performance of STN is sensitive to varied tuning schedules. %
As shown in Figure~\ref{fig:stn_learning_curves} (a) \& (b), our approach leads to the best performance in terms of validation accuracy and loss with only a single run of the experiment. In Figure~\ref{fig:stn_learning_curves} (c), CE-GP-UCB successfully finds an efficient tuning schedule online that saves around 20\% training cost meanwhile achieving lower validation loss. 
Table~\ref{tab:stn_vs_mab} gives a quantitative comparison between our algorithm and other baselines including grid search and TV bandits algorithms with Bernoulli strategy. After taking account of the cost for evaluation on the validation set, we find that the GP-UCB algorithm with cost-efficient query rule outperforms other baselines on both validation and testing set with modest computation overhead.

\subsection{Unsupervised Representation Learning}
We evaluate our approach on tuning data augmentations in unsupervised contrastive learning. The goal of unsupervised learning is to learn meaningful representations directly from unlabeled data since acquiring annotations is expensive.
SimCLR~\citep{Chen20simclr} is the state-of-the-art approach that learns representations by maximizing agreement between differently augmented views of the same data example.
It is shown in~\cite{Chen20simclr} that the types of data augmentations are critical in learning meaningful representations so researchers conducted extensive ablation study in data augmentations. 
By contrast, we aim to apply CE-GP-UCB in tuning the probability of randomly applying eight common data augmentations including \textit{cropping}, \textit{color distortion}, \textit{cutout}, \textit{flipping horizontally and vertically}, \textit{rotation}, \textit{Gaussian blur} and \textit{gray scale} with one single run. We define the accuracy gain (\%) on validation set as the costly feedback. The initial probability for applying each data augmentation is set 0.5 and the tuning range is $[0, 1.0]$. The forgetting rate $\epsilon$ and $\beta_t$ are set as 0.01 and 1.0, respectively. We leave the implementation details of SimCLR in Appendix~\ref{supp:exp}.

\begin{wrapfigure}{r}{0.55\textwidth}
\vspace{-0.28in}
\begin{minipage}{0.55\textwidth}
\begin{table}[H]
\centering
\caption{Linear readout performance (ImageNet100 top-1 accuracy) of ResNet50 with different data augmentations.} %
\vspace{-0.1in}
\label{tab:simclr}
\resizebox{\textwidth}{!}{
\begin{tabular}{lcccc} \toprule
 & Top1 (R10) & Top1 (R100) & Time \\ \midrule
Baseline & 70.91 & 73.20 & 1.00 $\times$ \\
TV-GP-UCB (full) & 75.14 & 77.95 & 1.97 $\times$ \\
TV-GP-UCB Ber(0.6) & 72.15 & 75.31 & 1.64 $\times$ \\ %
CE-GP-UCB ($\kappa=0.9$) & 74.80 & 77.56 & 1.88 $\times$ \\
CE-GP-UCB ($\kappa=0.8$) & 74.77 & 77.62 & 1.71 $\times$ \\
CE-GP-UCB ($\kappa=0.7$) & 74.58 & 77.27 & 1.61 $\times$ \\ \midrule
Human expert~\citep{Chen20simclr} & 75.00 & 77.99 & - \\\bottomrule
\end{tabular}
}
\end{table}
\end{minipage}
\vspace{-0.1in}
\end{wrapfigure}

Table~\ref{tab:simclr} shows the linear readout performance for different models, where the baseline is SimCLR with fixed initial probability (0.5) for eight data augmentations. As shown in Table~\ref{tab:simclr}, our method reaches human expert-level performance with one single run of experiment and surpass baseline (no HPO) by a large margin. More importantly, proposed CE-GP-UCB is able to reduce 40\% query cost for validation with minor performance loss compared to TV-GP-UCB with full observations; however, TV-GP-UCB with Bernoulli strategy results in a large performance loss. It again verifies proposed cost-efficient query rule is able to successfully maintain the most informative queries and provide an alternative BO solution when resources are limited.

\vspace{-0.1in}
\section{Conclusion}
Online HPO typically requires constantly evaluating on the validation set and taking gradient steps \textit{w.r.t.} hyperparameters, resulting in drastically higher training cost. In this paper, we propose a novel \textit{costly feedback} Bayesian optimization (BO) setting to model the computation cost for querying the reward signals from the validation set. To keep most informative queries and skip less informative ones, we introduce a cost-efficient GP-UCB algorithm that automatically assesses the uncertainty of current GP model.  We further verify the effectiveness of our proposed approach with extensive experiments on both synthetic data and large-scale real world online HPO for deep neural networks.

\section*{Acknowledgement}
We thank Renjie Liao, Sivabalan Manivasagam and James Tu for their thoughtful discussions and useful feedback.

\bibliography{reference}

\newpage
\appendix

\setcounter{section}{0}
\setcounter{figure}{0}
\setcounter{table}{0}
\setcounter{theorem}{0}

\makeatletter 
\renewcommand{\thefigure}{A\arabic{figure}}
\setcounter{table}{0}
\renewcommand{\thetable}{A\arabic{table}}
\setcounter{algorithm}{0}
\renewcommand{\thealgorithm}{A\arabic{algorithm}}

\section{Choices of Confidence threshold $\kappa$ in CE-GP-UCB}
\label{supp:kappa_choice}
In what follows, we first give an important property induced by no overlapping between the confidence bounds of best candidate $x_t$ and the other points $x \in \mathcal{D} \setminus \{x_t\}$. 

\begin{lemma}
Let $\delta \in (0, 1)$ and $x_t = \arg \max _{x \in \mathcal{D}}
\tilde{\mu}_{t-1}(x)+\sqrt{\beta_{t}} \tilde{\sigma}_{t-1}(x)$, if $\mathrm{ucb}(x) \leq \mathrm{lcb}(x_t), \forall x \in \mathcal{D} \setminus \{x_t\}$, then picking $x_t$ at round $t$
produces no regret with at least probability $1 - \delta$. %
\label{lem:free_no_regret}
\end{lemma}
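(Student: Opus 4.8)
The plan is to reduce the claim to the standard validity of the GP confidence bounds and then chain two elementary inequalities. First I would recall that, under the choice of $\{\beta_t\}$ adopted from \cite{Bogunovic16}, the exploration parameters are calibrated precisely so that the confidence interval brackets the true function at every point and every round simultaneously. Concretely, writing $\ucb(x) = \tilde{\mu}_{t-1}(x) + \sqrt{\beta_t}\,\tilde{\sigma}_{t-1}(x)$ and $\lcb(x) = \tilde{\mu}_{t-1}(x) - \sqrt{\beta_t}\,\tilde{\sigma}_{t-1}(x)$, there is an event $\mathcal{E}$ of probability at least $1-\delta$ on which
$$\lcb(x) \le f_t(x) \le \ucb(x), \qquad \forall x \in \mathcal{D},\ \forall t.$$
I would take this joint confidence property as given and condition on $\mathcal{E}$ for the rest of the argument; this is where the $\delta$ in the statement enters.

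The core of the proof is a short sandwiching step. Fix any competitor $x \in \mathcal{D} \setminus \{x_t\}$. On $\mathcal{E}$ the upper bound gives $f_t(x) \le \ucb(x)$; the hypothesis of the lemma gives $\ucb(x) \le \lcb(x_t)$; and the lower bound gives $\lcb(x_t) \le f_t(x_t)$. Concatenating,
$$f_t(x) \le \ucb(x) \le \lcb(x_t) \le f_t(x_t),$$
so $f_t(x_t) \ge f_t(x)$. Since this holds for every $x \neq x_t$ and trivially for $x = x_t$, the selected point attains the maximum, i.e.\ $f_t(x_t) = \max_{x \in \mathcal{D}} f_t(x)$, whence the instantaneous regret $r_t = \max_{x \in \mathcal{D}} f_t(x) - f_t(x_t) = 0$. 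Because the entire chain lives on $\mathcal{E}$, which has probability at least $1-\delta$, picking $x_t$ incurs no regret with probability at least $1-\delta$, as claimed.

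The deterministic sandwich carries no genuine difficulty, so the only delicate point is the first step: it rests entirely on the confidence bounds being jointly valid over all $x$ and all $t$, which is exactly what the $\beta_t$ schedule secures. The one thing I would double-check when writing the formal version is the index bookkeeping, namely that $\ucb$ and $\lcb$ in the lemma are built from the posterior $\tilde{\mu}_{t-1},\tilde{\sigma}_{t-1}$ used in the selection rule for $x_t$, so that they certify $f_t$ (the function actually incurring the round-$t$ regret) rather than $f_{t-1}$; aligning these indices with the convention of \cite{Bogunovic16} is what makes the middle inequality meaningful.
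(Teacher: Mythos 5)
Your proposal is correct and follows essentially the same route as the paper's proof: invoke the joint high-probability confidence bounds $\lcb(x) \le f_t(x) \le \ucb(x)$ guaranteed by the $\beta_t$ schedule, then chain $f_t(x) \le \ucb(x) \le \lcb(x_t) \le f_t(x_t)$ to conclude zero instantaneous regret on that event. The only cosmetic difference is that you sandwich every competitor $x \neq x_t$ while the paper applies the same inequality only to the true maximizer $x_t^*$ (splitting on whether $x_t^* = x_t$), which is logically equivalent.
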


\begin{proof}[Proof of Lemma~\ref{lem:free_no_regret}]
From previous literature, we know that, with high probability
\begin{align}
    \left|f_t(x) - \tilde{\mu}_{t-1}(x) \right| \leq \beta_t^{\frac{1}{2}} \tilde{\sigma}_{t-1}(x). \quad \forall x \in \mathcal{D}, \forall t \geq 1
\end{align}
As a result, we have $\tilde{\mu}_{t-1}(x) - \beta_t^{\frac{1}{2}} \tilde{\sigma}_{t-1}(x) \leq f_t(x) \leq \tilde{\mu}_{t-1}(x) + \beta_t^{\frac{1}{2}} \tilde{\sigma}_{t-1}(x)$ $\forall x \in \mathcal{D}$ $\forall t \geq 1$. Let $x^*_t = \max_{x \in \mathcal{D}}f_t(x)$. If $x^*_t = x_t$, then the instantaneous regret $r_t = 0$. Otherwise,
\begin{align}
    f_t(x^*) - f_t(x_t) &\leq \tilde{\mu}_{t-1}(x^*_t) + \beta_t^{\frac{1}{2}} \tilde{\sigma}_{t-1}(x^*_t) - \left(\tilde{\mu}_{t-1}(x_t) - \beta_t^{\frac{1}{2}} \tilde{\sigma}_{t-1}(x_t) \right) \leq 0
\end{align}
From the definition of $x_t^*$, we known $f_t(x^*) - f_t(x_t) \geq 0$. So we have picking $x_t$ at round t produces no regret with high probability.
\end{proof}

Lemma~\ref{lem:free_no_regret} indicates that if there is no overlapping between the confidence bounds of best candidate
$x_t$ and the others, the agent can assure that $x_t$ is the optimal choice with high probability. As a consequence, if the above condition satisfies, the agent is able to receive no feedback meanwhile resulting in no performance loss at this time. This inspires us to leverage the information that exists in relative magnitudes of posterior mean and variance for the given candidates and obtain the cost-efficient query strategy, which is a loosed condition with a confidence threshold $\kappa \in (0, 1)$.

We know $\hat{y}_t(x)$ follows the Gaussian distribution $\mathcal{N}(\tilde{\mu}_{t-1}(x), \tilde{\sigma}_{t-1}(x))$. Then we know $\hat{y}_t(x_t) - \hat{y}_t(x_t^*)$ follows the distribution $$\mathcal{N}\left(\tilde{\mu}_{t-1}(x_t) - \tilde{\mu}_{t-1}(x_t^*), \sqrt{\tilde{\sigma}^2_{t-1}(x_t) + \tilde{\sigma}^2_{t-1}(x_t^*)} \right).$$
As a consequence, 
\begin{align}
1 - \Phi\left(- \frac{\tilde{\mu}_{t-1}(x_t) - \tilde{\mu}_{t-1}(x_t^*)}{\sqrt{\tilde{\sigma}^2_{t-1}(x_t) + \tilde{\sigma}^2_{t-1}(x_t^*)}}\right) \geq \gamma_t,
\end{align}
where $\Phi(\cdot)$ is the CDF of standard Gaussian distribution. To simplify further, we have 
\begin{align}
    \tilde{\mu}_{t-1}(x_t) - \tilde{\mu}_{t-1}(x_t^*) &\geq \Phi^{-1} (\gamma_t) \sqrt{\tilde{\sigma}^2_{t-1}(x_t) + \tilde{\sigma}^2_{t-1}(x_t^*)} \\
    &\geq \frac{\Phi^{-1}(\gamma_t)}{\sqrt{2}} \left(\tilde{\sigma}_{t-1}(x_t) + \tilde{\sigma}_{t-1}(x_t^*) \right).
\end{align}
If $x_t^{*} \neq x_t$, we have 
\begin{align}
    f_t(x^*) - f_t(x_t) &\leq \tilde{\mu}_{t-1}(x^*_t) + \beta_t^{\frac{1}{2}} \tilde{\sigma}_{t-1}(x^*_t) - \left(\tilde{\mu}_{t-1}(x_t) - \beta_t^{\frac{1}{2}} \tilde{\sigma}_{t-1}(x_t) \right) \\
    &\leq \tilde{\mu}_{t-1}(x^*_t) - \tilde{\sigma}_{t-1}(x^*_t) + \beta_t^{\frac{1}{2}}\left(\tilde{\sigma}_{t-1}(x^*_t) + \tilde{\sigma}_{t-1}(x_t) \right) \\
    &\leq \left(\beta_t^{\frac{1}{2}} - \frac{\Phi^{-1}(\gamma_t)}{\sqrt{2}} \right) \left( \tilde{\sigma}_{t-1}(x^*_t) + \tilde{\sigma}_{t-1}(x_t) \right).
\end{align}
Let $\gamma_t = \Phi(\sqrt{2}\beta_t)$, we recover the LCB-UCB rule (free no regret). If $\gamma_t < \Phi(\sqrt{2}\beta_t)$, then the agents suffer from a ``small regret'' $\left(\beta_t^{\frac{1}{2}} - \frac{\Phi^{-1}(\gamma_t)}{\sqrt{2}} \right) \left( \tilde{\sigma}_{t-1}(x^*_t) + \tilde{\sigma}_{t-1}(x_t) \right)$.

In practice, we found LCB-UCB rule (Lemma~\ref{lem:free_no_regret}) is a very strict rule that might has limited effect in reducing the query cost. For instance, as shown in Table~\ref{tab:synthetic_bo}, we found LCB-UCB rule is more strict than $\kappa=0.99$.

\section{Supplementary Experiments}
\label{supp:exp}
In this section, we provide the details of experimental set-up and supplementary results. 
\subsection{Evaluation on Synthetic Data} 
Apart from BO setting where the candidates are correlated (e.g., $k_\mathrm{space}$ is SE or Matérn kernels), we consider the finite-arm bandits setting where each arm is assumed independent (i.e., $k_\mathrm{space} = I$). Specifically, we consider a three-armed bandit problem where the reward for each arm is sampled from its underlying time varying functions ($\sigma^2 = 0.01$). Specifically, we design three kinds of functions: (1) a sine curve (2) a Gaussian curve, and (3) a piecewise step function, where the time horizon $T = 2000$.  Figure~\ref{fig:bandits_synthetic_data} shows an example of the time-varying functions. 
We compare with other TV bandits algorithms with Bernoulli sampling policy (Sec~\ref{sec:strategies}), i.e., $m \sim \Ber(0.1)$ incorporated, including EXP3.S~\citep{auer2002nonstochastic}, $\epsilon$-greedy~\citep{kuleshov2000algorithms}, Softmax (Boltzmann Exploration)~\citep{kuleshov2000algorithms}, UCB~\citep{auer2002finite} and GP-UCB~\citep{Srinivas09}. We used the RBF kernel in GP models and the parameters (e.g., length scale) for GP are optimized using maximum likelihood. Each experiment is repeated 100 times with different random seeds.

\begin{figure}[t]
\centering
\begin{subfigure}[b]{.325\textwidth}
    \centering
    \includegraphics[width=\textwidth]{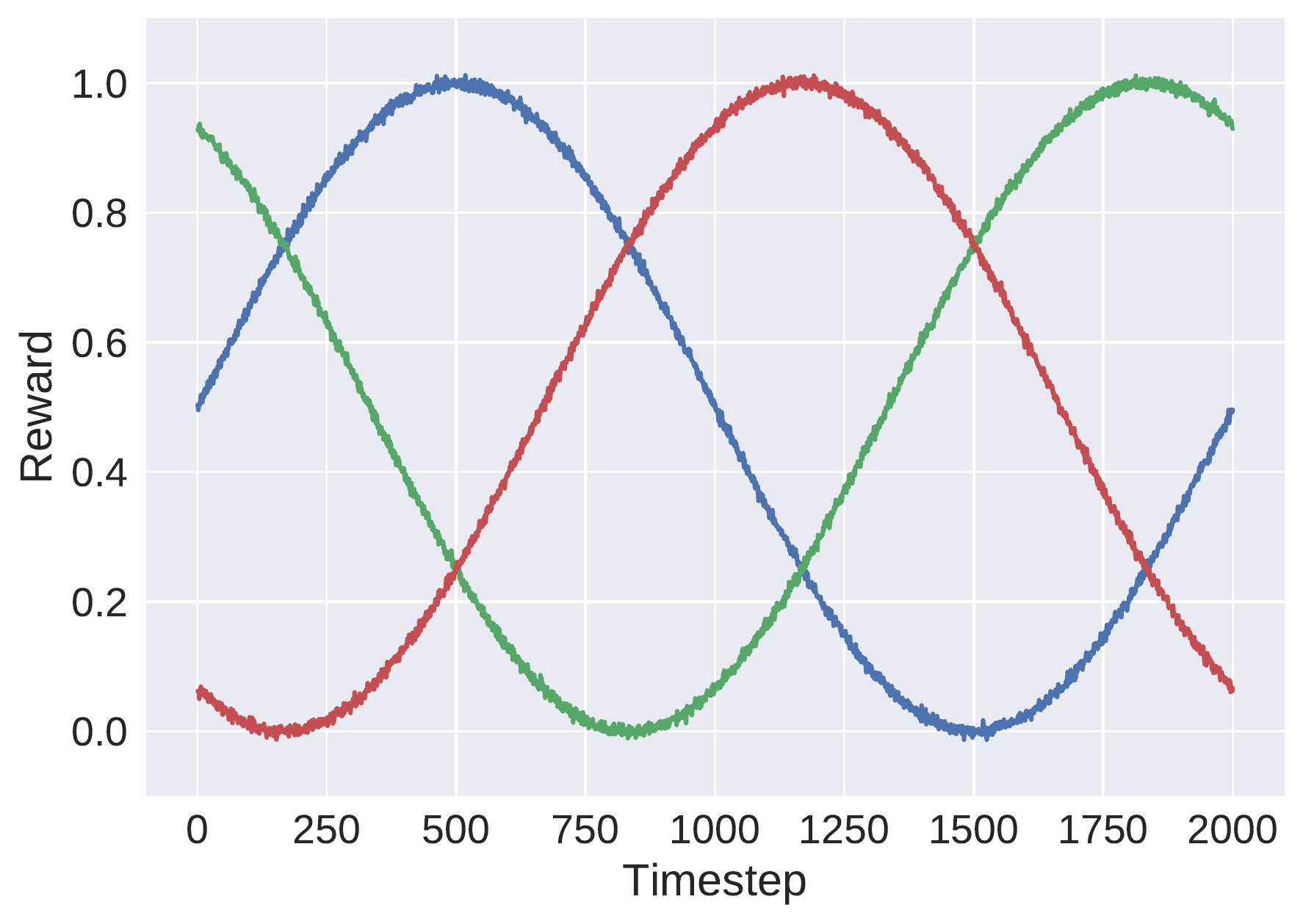}
    \caption{\texttt{Sine}}
\end{subfigure}
\begin{subfigure}[b]{.325\textwidth}
    \centering
    \includegraphics[width=\textwidth]{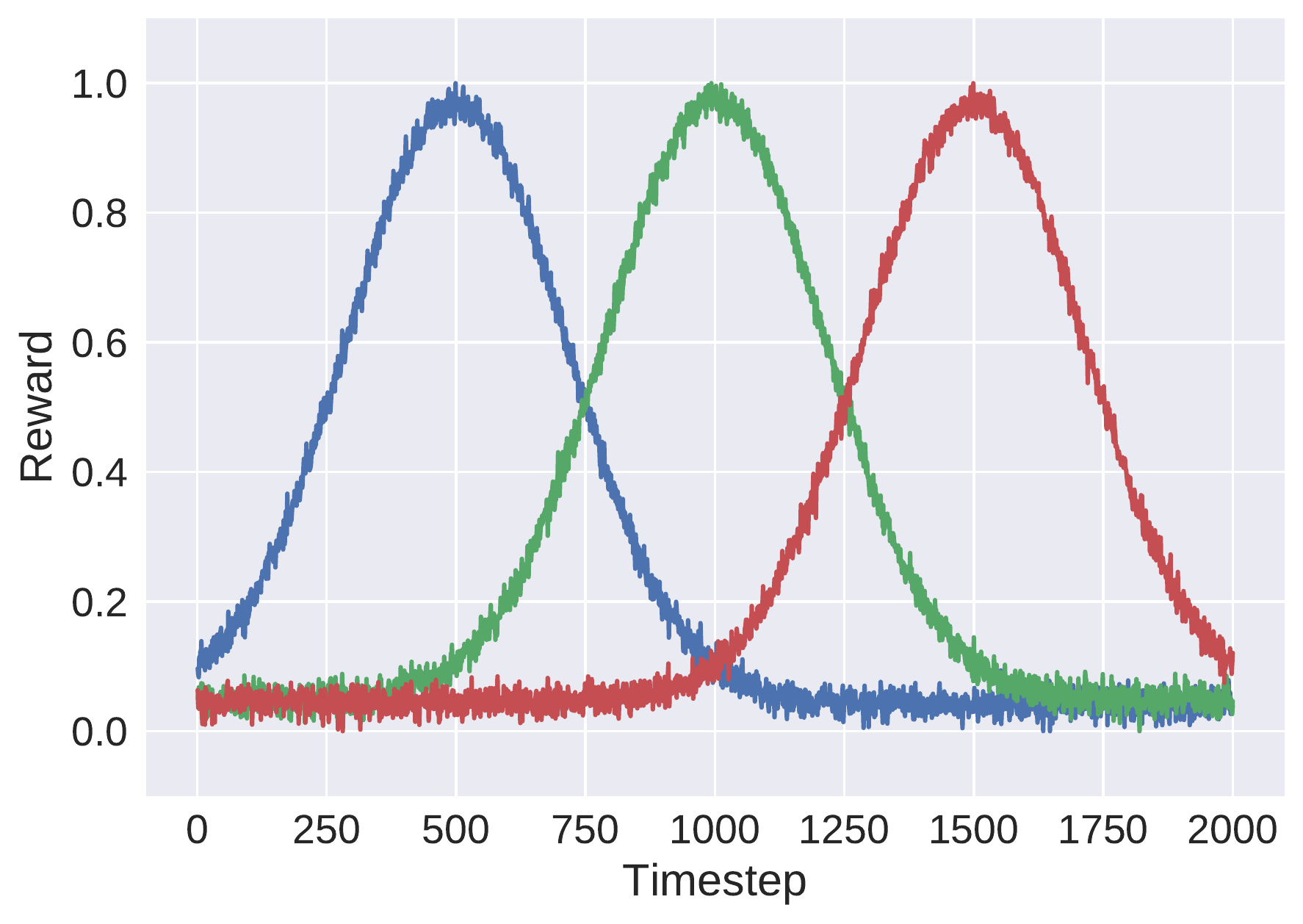}
    \caption{\texttt{Gaussian}}
\end{subfigure}
\begin{subfigure}[b]{.325\textwidth}
    \centering
    \includegraphics[width=\textwidth]{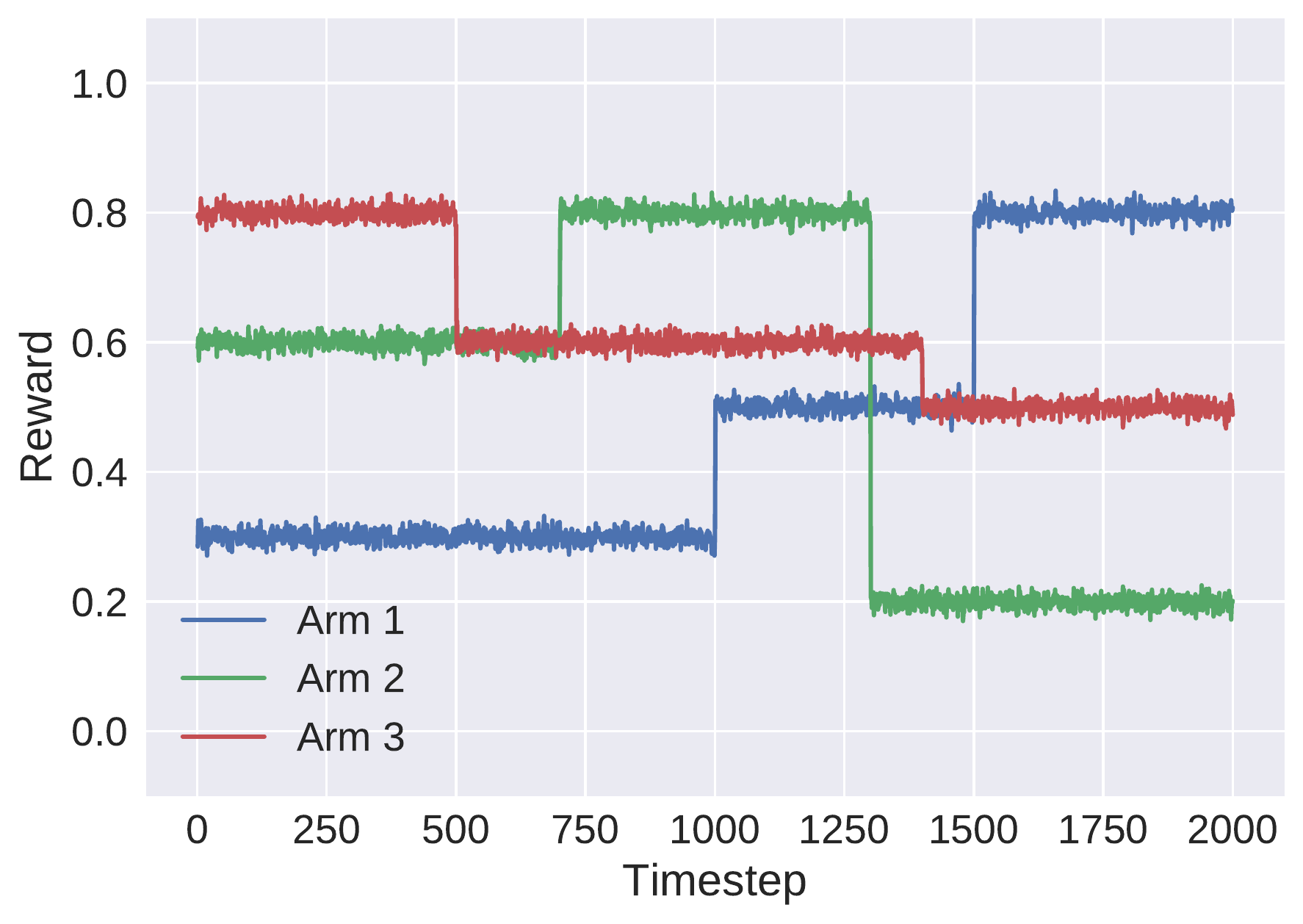}
    \caption{\texttt{Piecewise}}
\end{subfigure}
\caption{Time-varying functions considered for the synthetic bandit experiments.} 
\label{fig:bandits_synthetic_data}
\end{figure}

\begin{table}
\centering
\caption{Comparison of CE-GP-UCB ($\kappa= 95\%$) with standard TV bandits algorithms with Bernoulli query strategy on three kinds of synthetic data.}
\label{tab:tv_bandits}
\resizebox{\textwidth}{!}{
\begin{tabular}{@{}lcccccc@{}}
\toprule
 & \multicolumn{2}{c}{Sine} & \multicolumn{2}{c}{Gaussian} & \multicolumn{2}{c}{Piecewise} \\ \cmidrule(l){2-7} 
 & $R_T / T$ & $C_T$ & $R_T / T$ & $C_T$ & $R_T / T$ & $C_T$ \\ \midrule
EXP3.S~\citep{auer2002nonstochastic} & 0.384 $\pm$ 0.036 & 201 $\pm$ 15 & 0.257 $\pm$ 0.017 & 200 $\pm$ 11 & 0.437 $\pm$ 0.046 & 201 $\pm$ 14 \\
$\epsilon$-greedy~\citep{kuleshov2000algorithms} & 0.167 $\pm$ 0.041 & 198 $\pm$ 14  & 0.145 $\pm$ 0.026 & 200 $\pm$ 12 &  0.196 $\pm$ 0.026 & 199 $\pm$ 14 \\
Softmax~\citep{kuleshov2000algorithms} & 0.132 $\pm$ 0.025 & 199 $\pm$ 13 & 0.159 $\pm$ 0.032 & 201 $\pm$ 14 & 0.256 $\pm$ 0.037 & 198 $\pm$ 14 \\
UCB~\citep{auer2002finite} & 0.084 $\pm$ 0.010 & 200 $\pm$ 14 & 0.112 $\pm$ 0.026 & 201 $\pm$ 15 & 0.172 $\pm$ 0.042 &  202 $\pm$ 14\\
GP-UCB~\citep{Srinivas09} & 0.104 $\pm$ 0.012 & 200 $\pm$ 14 & 0.139 $\pm$ 0.025 & 200 $\pm$ 14 & 0.122 $\pm$ 0.009 & 201 $\pm$ 12 \\
\textbf{CE-GP-UCB} & \textbf{0.017} $\boldsymbol{\pm}$ \textbf{0.009} & \textbf{52} $\boldsymbol{\pm}$ \textbf{7} & \textbf{0.038} $\boldsymbol{\pm}$ \textbf{0.017} & \textbf{53} $\pm$ \textbf{20} & \textbf{0.028} $\boldsymbol{\pm}$ \textbf{0.002} & \textbf{52} $\boldsymbol{\pm}$ \textbf{5} \\ \bottomrule
\end{tabular}
}
\end{table}

\begin{figure}[t]
\centering
\begin{subfigure}[b]{.325\textwidth}
    \centering
    \includegraphics[width=\textwidth]{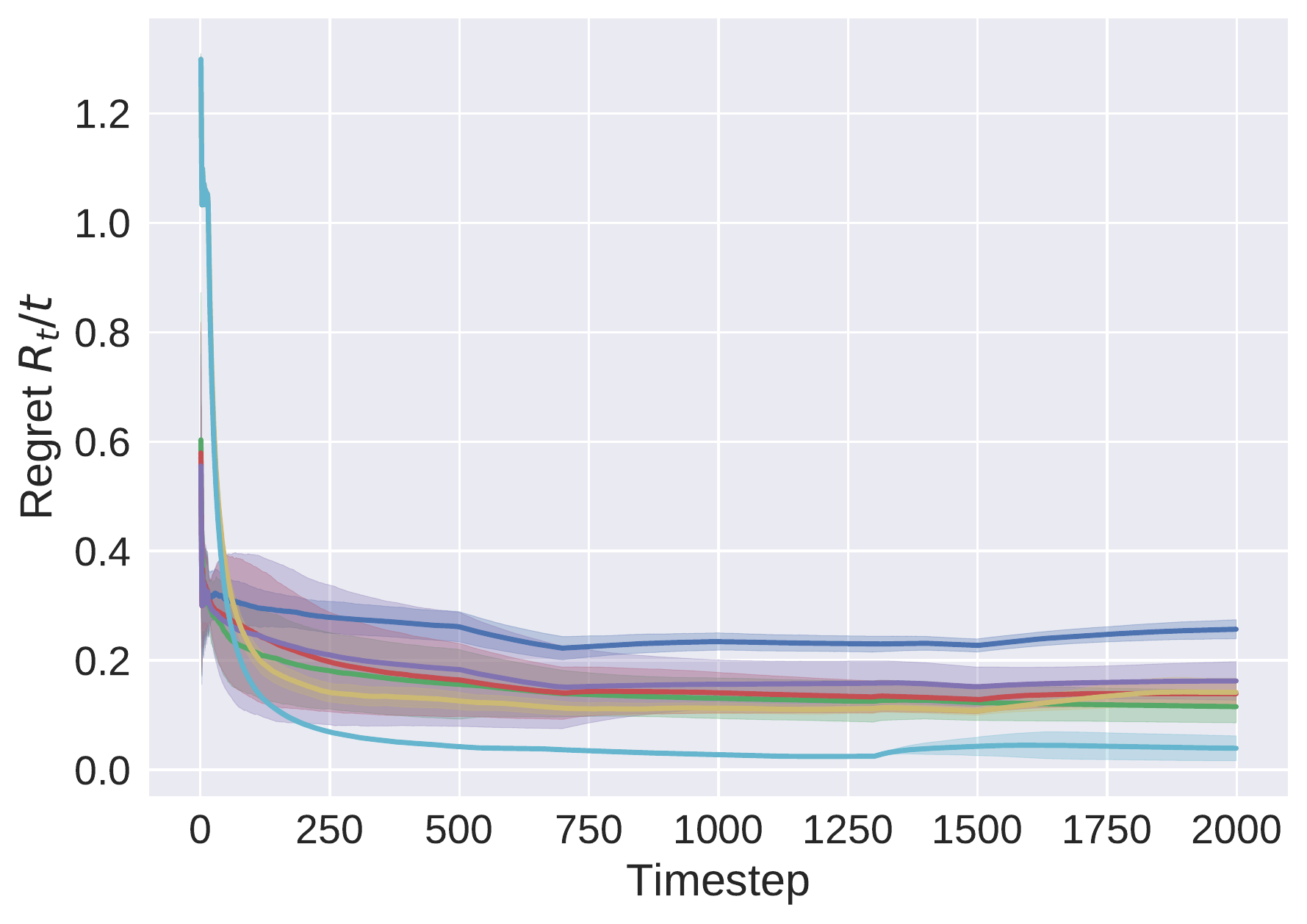}
    \vspace{-5mm}
    \caption{\texttt{Sine}}
    \label{fig:bandits_sine}
\end{subfigure}
\begin{subfigure}[b]{.325\textwidth}
    \centering
    \includegraphics[width=\textwidth]{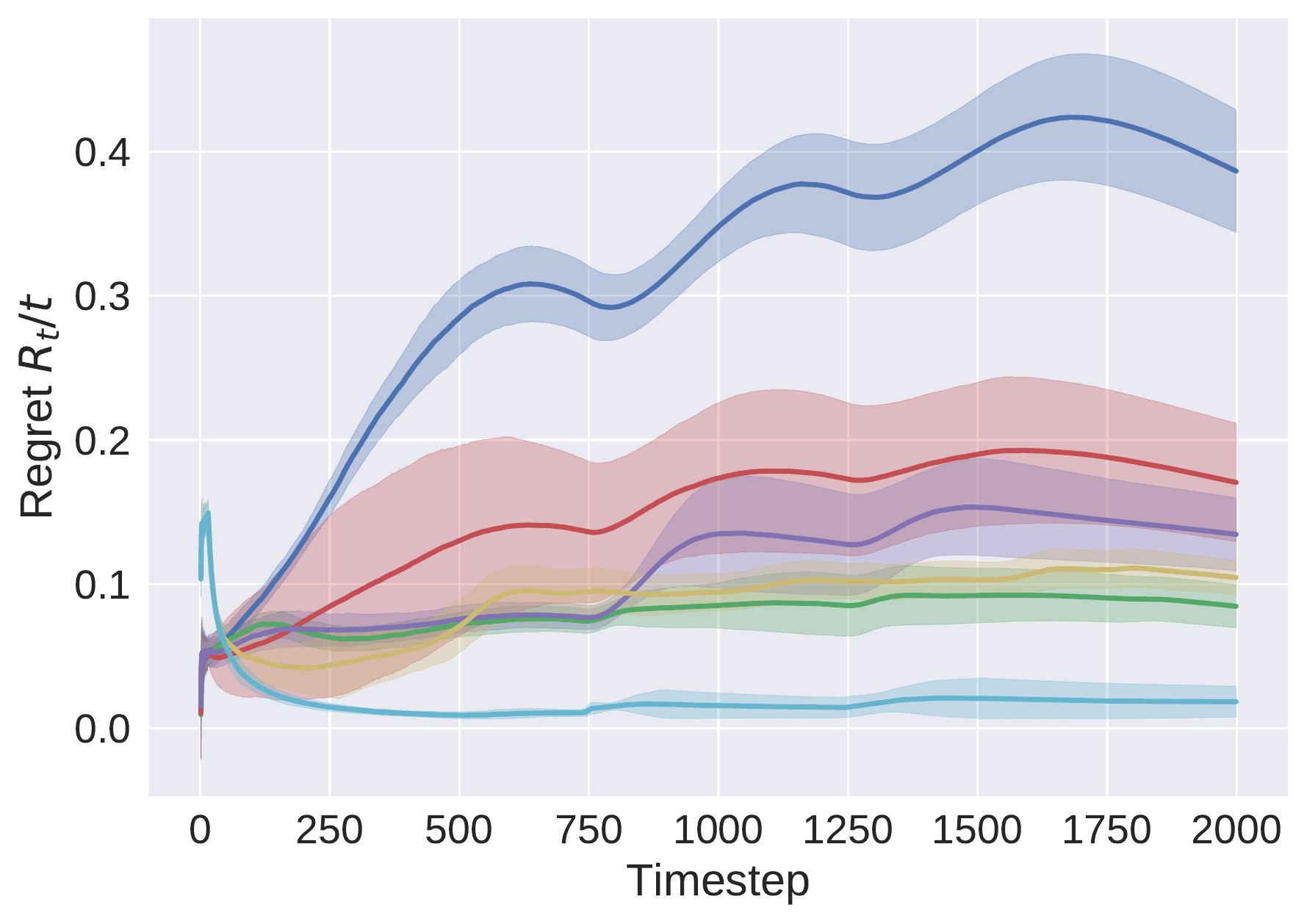}
    \vspace{-5mm}
    \caption{\texttt{Gaussian}}
    \label{fig:bandits_gaussian}
\end{subfigure}
\begin{subfigure}[b]{.325\textwidth}
    \centering
    \includegraphics[width=\textwidth]{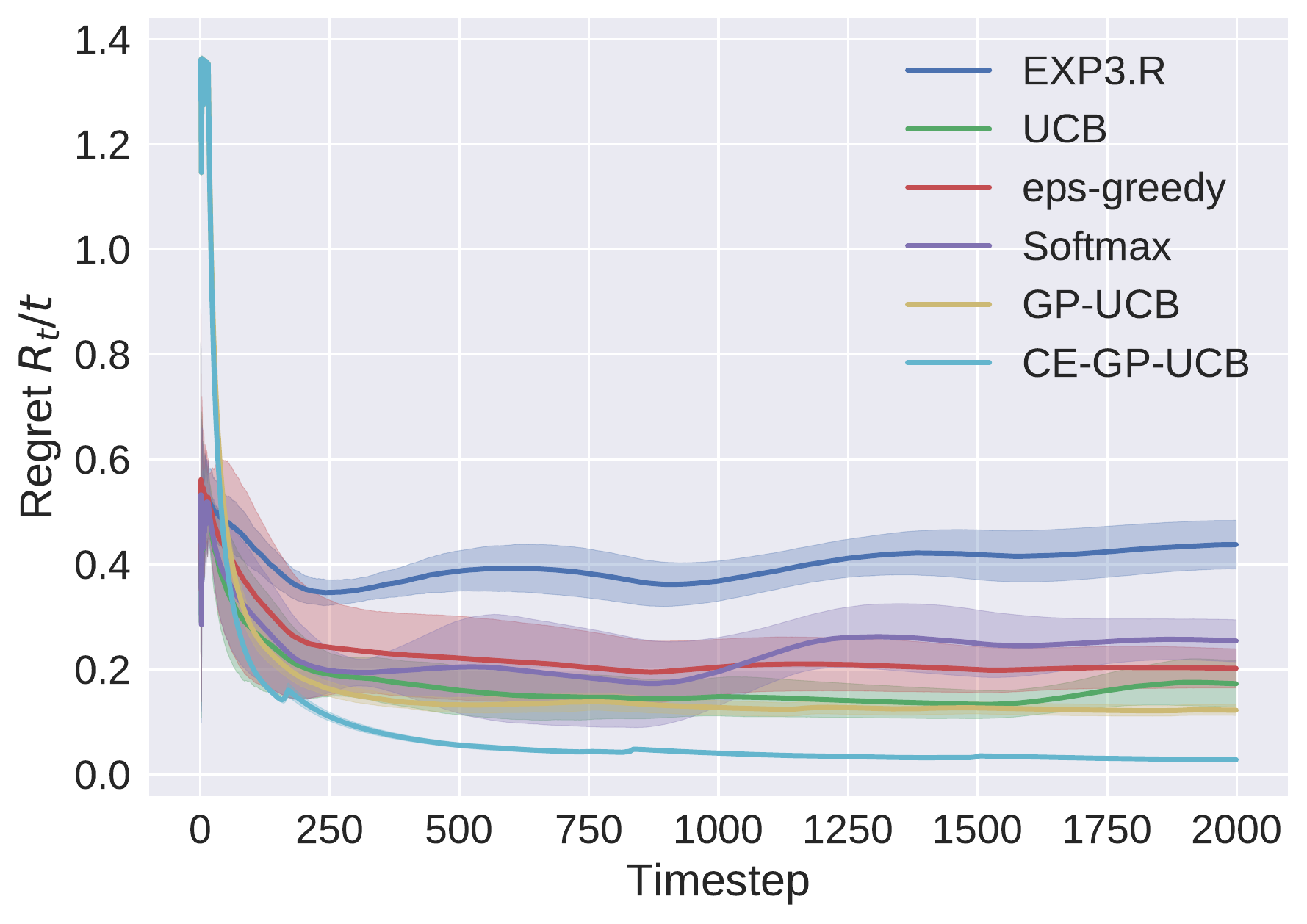}
    \vspace{-5mm}
    \caption{\texttt{Piecewise}}
    \label{fig:bandits_piecewise}
\end{subfigure}
\vspace{-0.05in}
\caption{Average regret $R_T/T$ for time-varying bandits algorithms on three different synthetic data.} %
\vspace{-0.12in}
\label{fig:bandits_synthetic}
\end{figure}

As shown in Table~\ref{tab:tv_bandits}, we found proposed CE-GP-UCB consistently achieves lower regrets while interacting less with the unknown functions. This is because CE-GP-UCB automatically assesses the confidence of current policy based on current GP models and given problems, and skips the queries if it is pretty confident the decision is correct. Moreover, compared to GP-UCB with full observations, our method saves around 97.5\% queries, which results in $40\times$ computation cost saving in evaluating the unknown objective. 
Figure~\ref{fig:bandits_synthetic} shows how $R_t/t$ evolves as training proceeds for different algorithms. It again verifies cost-efficient query rule can adapt to different problems quickly and achieve lower regrets.

\paragraph{TV Bayesian optimization}

Table~\ref{tab:synthetic_bo} presents numerical performance of CE-GP-UCB on synthetic data with different confidence threshold $\kappa$. The performance is averaged over 50 independent trials. We found that CE-GP-UCB can almost recover the performance of original TV-GP-UCB meanwhile significantly cutting off the cost for interacting with unknown functions which is usually very expensive. Taking $\epsilon = 0.05$ for instance, CE-GP-UCB ($\kappa=0.9$) reduces up to 40\% queries meanwhile only suffers from 2.6\% regret loss. %

\begin{table}[htbp!]
\caption{Numerical performance on synthetic data. $R_T/T$ and $C_T$ denote average regret and total cost for observing rewards. @$\kappa$ denotes the confidence threshold for CE-GP-UCB is $\kappa$. @* denotes the LCB-UCB rule (Lemma~\ref{lem:free_no_regret}). We highlight the setting within 10\% regret loss compared to TV-GP-UCB with full observations. The performance is averaged over 50 independent trials. %
}
\label{tab:synthetic_bo}
\resizebox{\textwidth}{!}{
\begin{tabular}{@{}lcccccccccc@{}}
\toprule
 & \multicolumn{2}{c}{$\epsilon=0.003$} & \multicolumn{2}{c}{$\epsilon=0.005$} & \multicolumn{2}{c}{$\epsilon=0.01$} & \multicolumn{2}{c}{$\epsilon=0.03$} & \multicolumn{2}{c}{$\epsilon=0.05$} \\ \cmidrule(l){2-11} 
 & $R_T/T$ & $C_T$ & $R_T/T$ & $C_T$ & $R_T/T$ & $C_T$ & $R_T/T$ & $C_T$ & $R_T/T$ & $C_T$ \\ \midrule
R-GP-UCB & $0.352\pm0.187$ & $499\pm0$ & $0.371\pm0.166$ & $499\pm0$ & $0.425\pm0.118$ & $499\pm0$ & $0.516\pm0.093$ & $499\pm0$ & $0.581\pm0.093$ & $499\pm0$ \\
TV-GP-UCB & $0.094\pm0.035$ & $499\pm0$ & $0.126\pm0.044$ & $499\pm0$ & $0.184\pm0.056$ & $499\pm0$ & $0.305\pm0.034$ & $499\pm0$ & $0.392\pm0.034$ & $499\pm0$ \\
\midrule
TV-GP-UCB Ber(0.2) & $0.231\pm0.089$ & $99\pm8$ & $0.275\pm0.084$ & $99\pm9$ & $0.350\pm0.094$ & $97\pm9$ & $0.561\pm0.122$ & $100\pm9$ & $0.694\pm0.090$ & $100\pm8$ \\
TV-GP-UCB Ber(0.3) & $0.176\pm0.093$ & $147\pm9$ & $0.218\pm0.104$ & $150\pm12$ & $0.312\pm0.089$ & $144\pm10$ & $0.485\pm0.089$ & $150\pm10$ & $0.592\pm0.096$ & $153\pm10$ \\
TV-GP-UCB Ber(0.4) & $0.147\pm0.072$ & $200\pm10$ & $0.186\pm0.073$ & $200\pm10$ & $0.275\pm0.090$ & $201\pm11$ & $0.428\pm0.073$ & $203\pm11$ & $0.522\pm0.073$ & $203\pm9$ \\
TV-GP-UCB Ber(0.5) & $0.121\pm0.060$ & $247\pm10$ & $0.160\pm0.061$ & $249\pm12$ & $0.241\pm0.073$ & $252\pm12$ & $0.382\pm0.064$ & $249\pm12$ & $0.480\pm0.058$ & $250\pm12$ \\
TV-GP-UCB Ber(0.6) & $0.119\pm0.055$ & $296\pm9$ & $0.144\pm0.041$ & $300\pm11$ & $0.230\pm0.078$ & $298\pm11$ & $0.363\pm0.045$ & $298\pm11$ & $0.452\pm0.045$ & $298\pm12$ \\
TV-GP-UCB Ber(0.7) & $0.112\pm0.049$ & $348\pm10$ & $0.151\pm0.060$ & $348\pm11$ & $0.209\pm0.068$ & $348\pm10$ & $0.335\pm0.038$ & $350\pm11$ & $0.429\pm0.040$ & $350\pm9$ \\
TV-GP-UCB Ber(0.8) & $0.102\pm0.046$ & $401\pm9$ & $0.140\pm0.058$ & $399\pm11$ & $0.198\pm0.067$ & $399\pm9$ & $0.328\pm0.035$ & $400\pm8$ & $0.415\pm0.037$ & $400\pm7$ \\
TV-GP-UCB Ber(0.9) & $0.098\pm0.048$ & $450\pm7$ & $0.132\pm0.051$ & $449\pm7$ & $0.196\pm0.066$ & $448\pm7$ & $0.311\pm0.036$ & $448\pm6$ & $0.404\pm0.035$ & $449\pm7$ \\
\midrule
CE-GP-UCB @0.60 & $0.535\pm0.416$ & $8\pm5$ & $0.558\pm0.319$ & $12\pm7$ & $0.600\pm0.289$ & $20\pm11$ & $0.642\pm0.141$ & $53\pm11$ & $0.680\pm0.119$ & $85\pm17$ \\
CE-GP-UCB @0.70 & $0.410\pm0.329$ & $16\pm11$ & $0.435\pm0.253$ & $21\pm13$ & $0.442\pm0.186$ & $41\pm25$ & $0.504\pm0.096$ & $91\pm21$ & $0.533\pm0.071$ & $134\pm21$ \\
CE-GP-UCB @0.75 & $0.341\pm0.249$ & $22\pm17$ & $0.367\pm0.199$ & $34\pm20$ & $0.368\pm0.176$ & $62\pm27$ & $0.441\pm0.077$ & $117\pm25$ & $0.484\pm0.074$ & $167\pm30$ \\
CE-GP-UCB @0.80 & $0.286\pm0.169$ & $31\pm27$ & $0.297\pm0.157$ & $51\pm36$ & $0.304\pm0.117$ & $76\pm27$ & $0.387\pm0.065$ & $146\pm31$ & $0.451\pm0.053$ & $200\pm30$ \\
CE-GP-UCB @0.85 & $0.226\pm0.131$ & $51\pm41$ & $0.248\pm0.095$ & $78\pm42$ & $0.266\pm0.069$ & $101\pm46$ & $0.356\pm0.054$ & $180\pm34$ & $0.416\pm0.041$ & $235\pm35$ \\
CE-GP-UCB @0.90 & $0.188\pm0.089$ & $82\pm73$ & $0.202\pm0.080$ & $101\pm63$ & $0.238\pm0.070$ & $140\pm48$ & $0.322\pm0.045$ & $233\pm37$ & $0.400\pm0.036$ & $291\pm30$ \\
CE-GP-UCB @0.95 & $0.157\pm0.074$ & $125\pm77$ & $0.161\pm0.051$ & $163\pm73$ & $0.210\pm0.052$ & $207\pm60$ & $0.307\pm0.037$ & $310\pm42$ & $0.397\pm0.034$ & $371\pm29$ \\
CE-GP-UCB @0.99 & $0.112\pm0.037$ & $292\pm104$ & $0.128\pm0.030$ & $309\pm83$ & $0.176\pm0.031$ & $363\pm60$ & $0.295\pm0.031$ & $435\pm26$ & $0.386\pm0.033$ & $468\pm15$ \\
CE-GP-UCB @* & $0.103\pm0.034$ & $386\pm79$ & $0.126\pm0.031$ & $414\pm50$ & $0.168\pm0.033$ & $442\pm41$ & $0.292\pm0.031$ & $482\pm11$ & $0.384\pm0.031$ & $492\pm7$ \\
\bottomrule
\end{tabular}
}
\end{table}

\subsection{Self-Tuning Networks}

\begin{figure}[t]
\centering
\begin{subfigure}[b]{.41\textwidth}
    \centering
    \includegraphics[width=\textwidth]{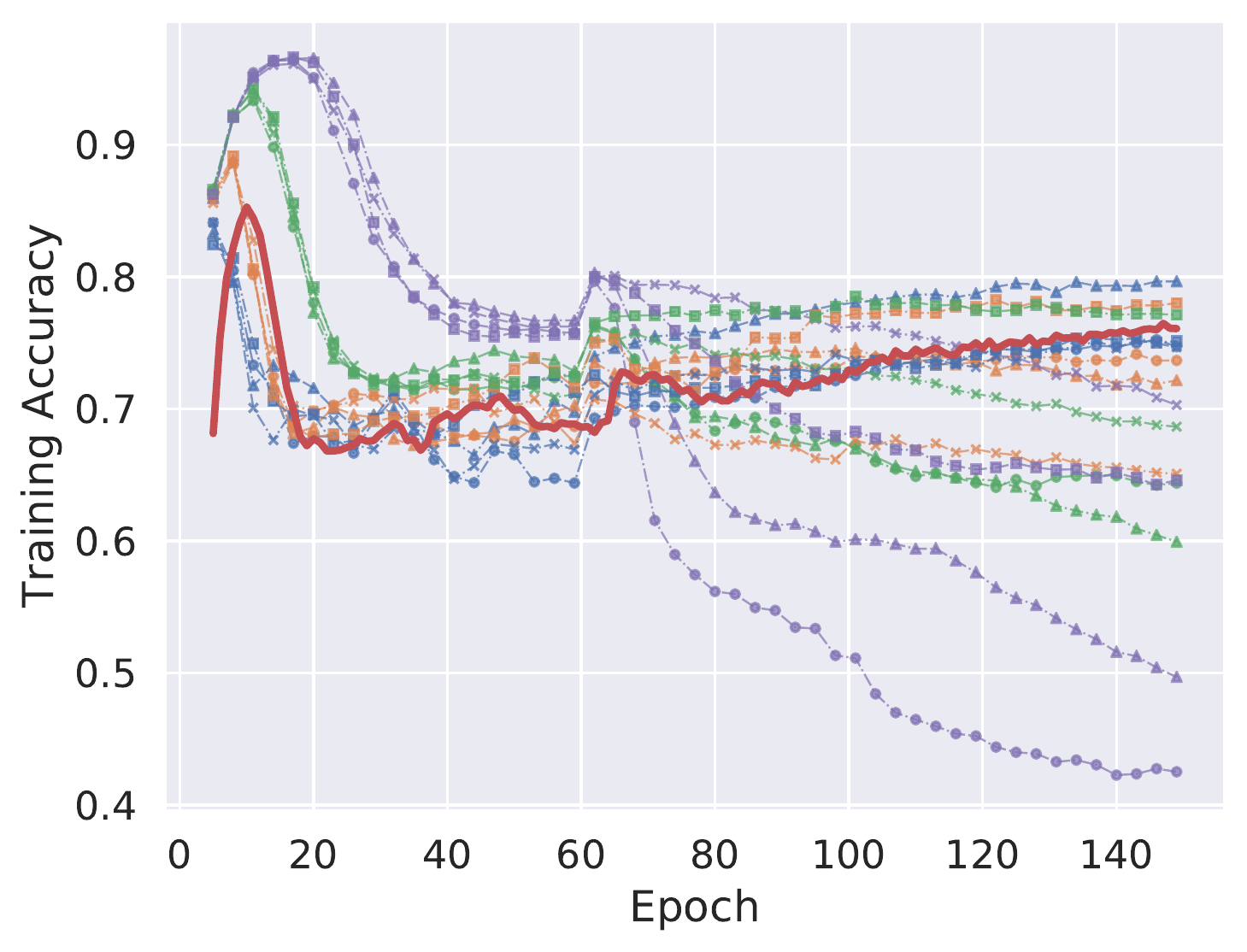}
    \vspace{-5mm}
    \caption{Training Accuracy}
    \label{fig:stn_train_acc}
\end{subfigure}
\begin{subfigure}[b]{.58\textwidth}
    \centering
    \includegraphics[width=\textwidth]{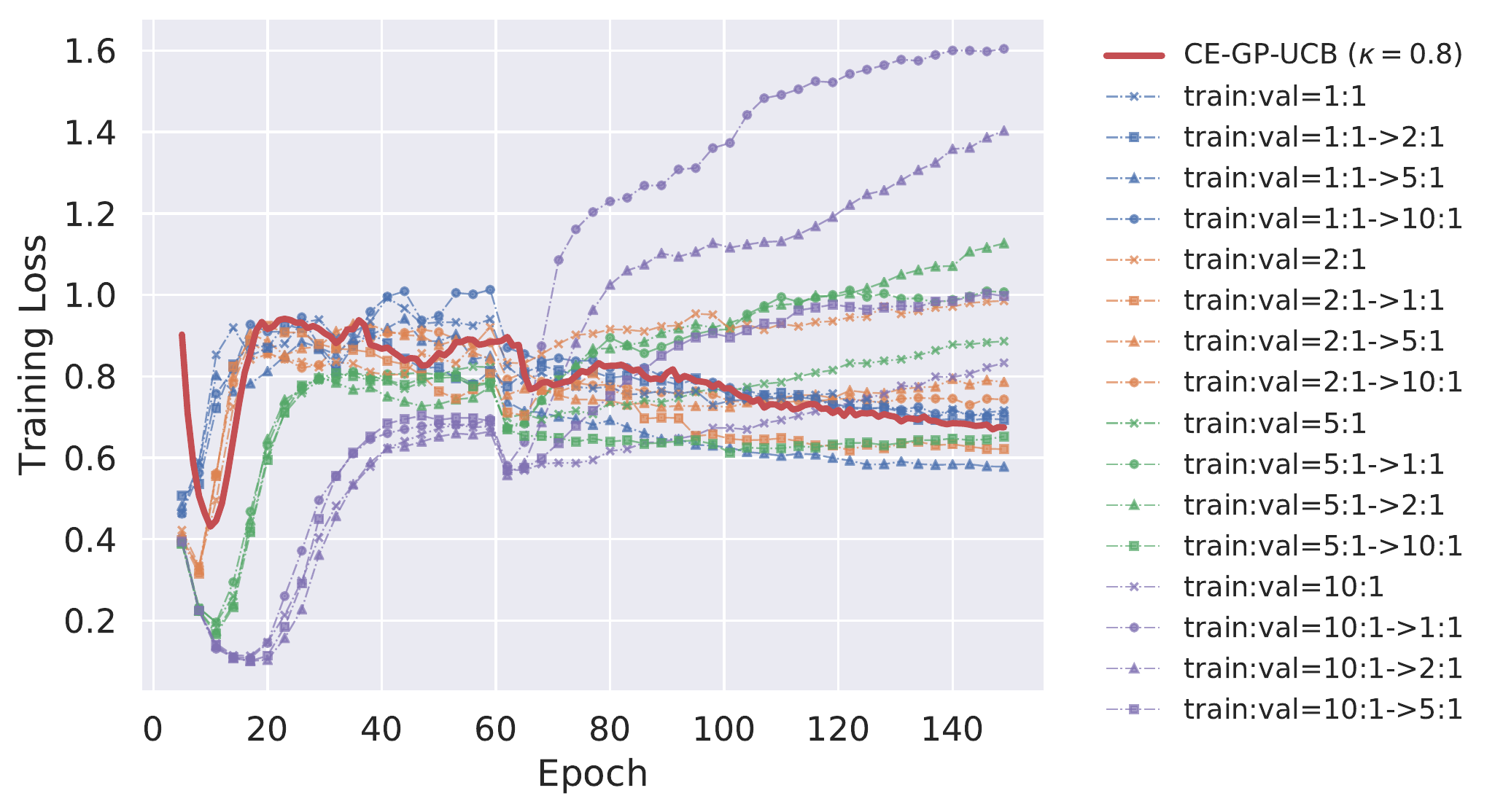}
    \vspace{-5mm}
    \caption{Training Loss}
    \label{fig:stn_train_loss}
\end{subfigure}
\caption{Learning curves on (a) training accuracy \& (b) training loss under different tuning schedules. Note that the training curves for STN is different form standard training since at the late stage of training, large amounts of data augmentations are added to avoid overfitting. Our proposed CE-GP-UCB finds an unique pattern that differs from static tuning schedules or switching schedules automatically.}
\label{fig:stn_train}
\end{figure}

\begin{figure}[htbp!]
\centering
\begin{subfigure}[b]{.325\textwidth}
    \centering
    \includegraphics[width=\textwidth]{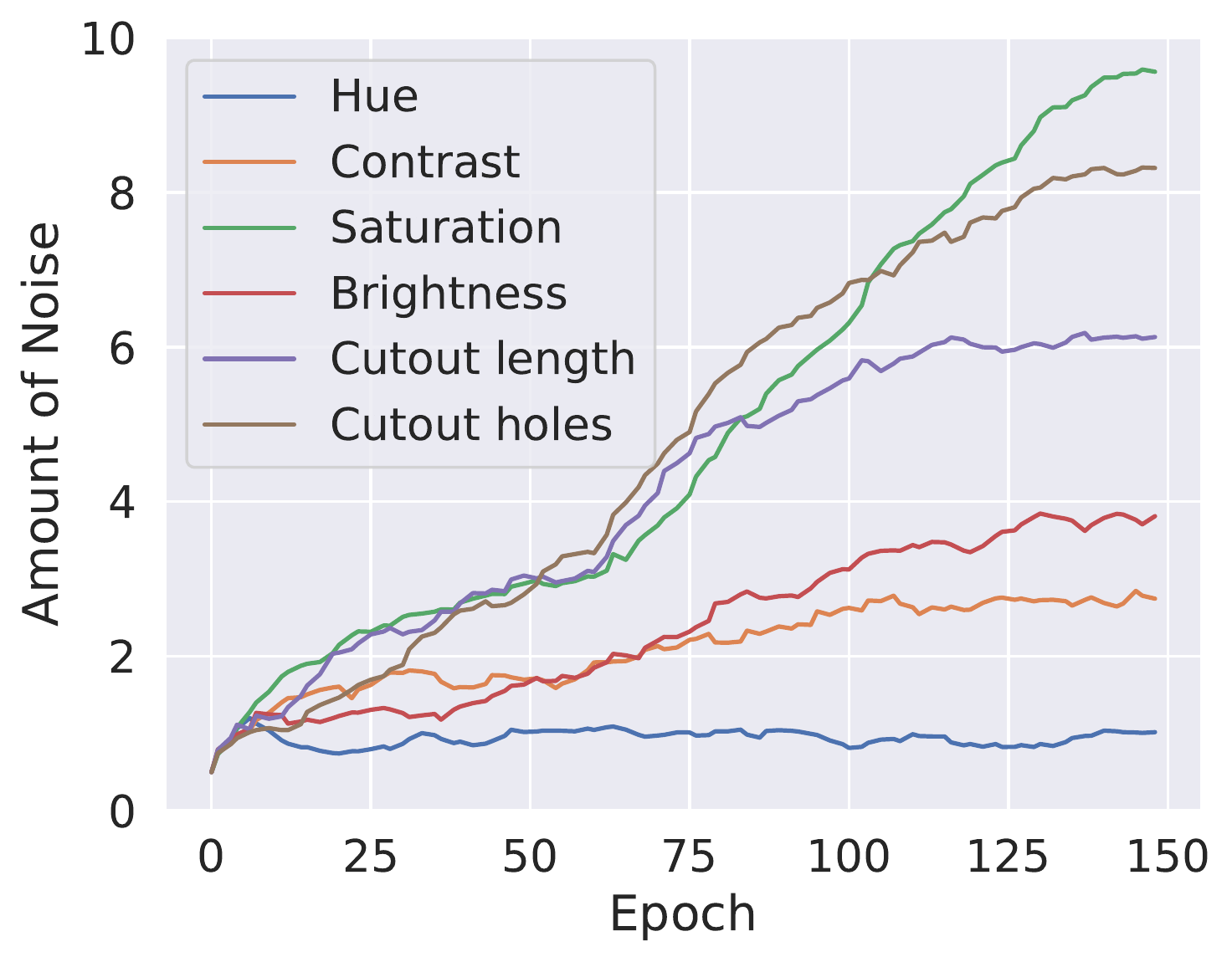}
    \vspace{-5mm}
    \caption{train/val=1:1}
    \label{fig:stn_aug_standard}
\end{subfigure}
\begin{subfigure}[b]{.325\textwidth}
    \centering
    \includegraphics[width=\textwidth]{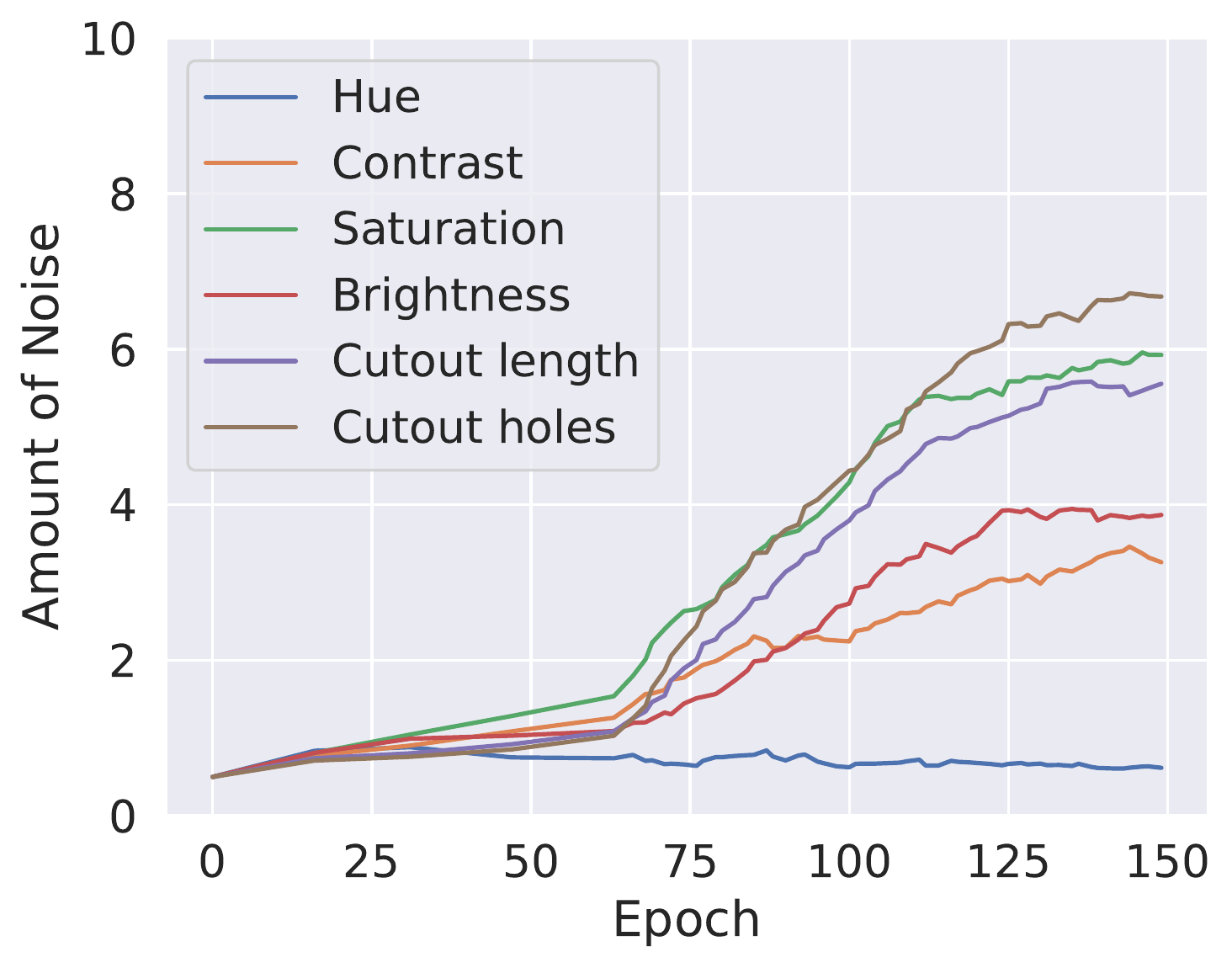}
    \vspace{-5mm}
    \caption{train/val=10:1->1:1}
    \label{fig:stn_aug_worst}
\end{subfigure}
\begin{subfigure}[b]{.325\textwidth}
    \centering
    \includegraphics[width=\textwidth]{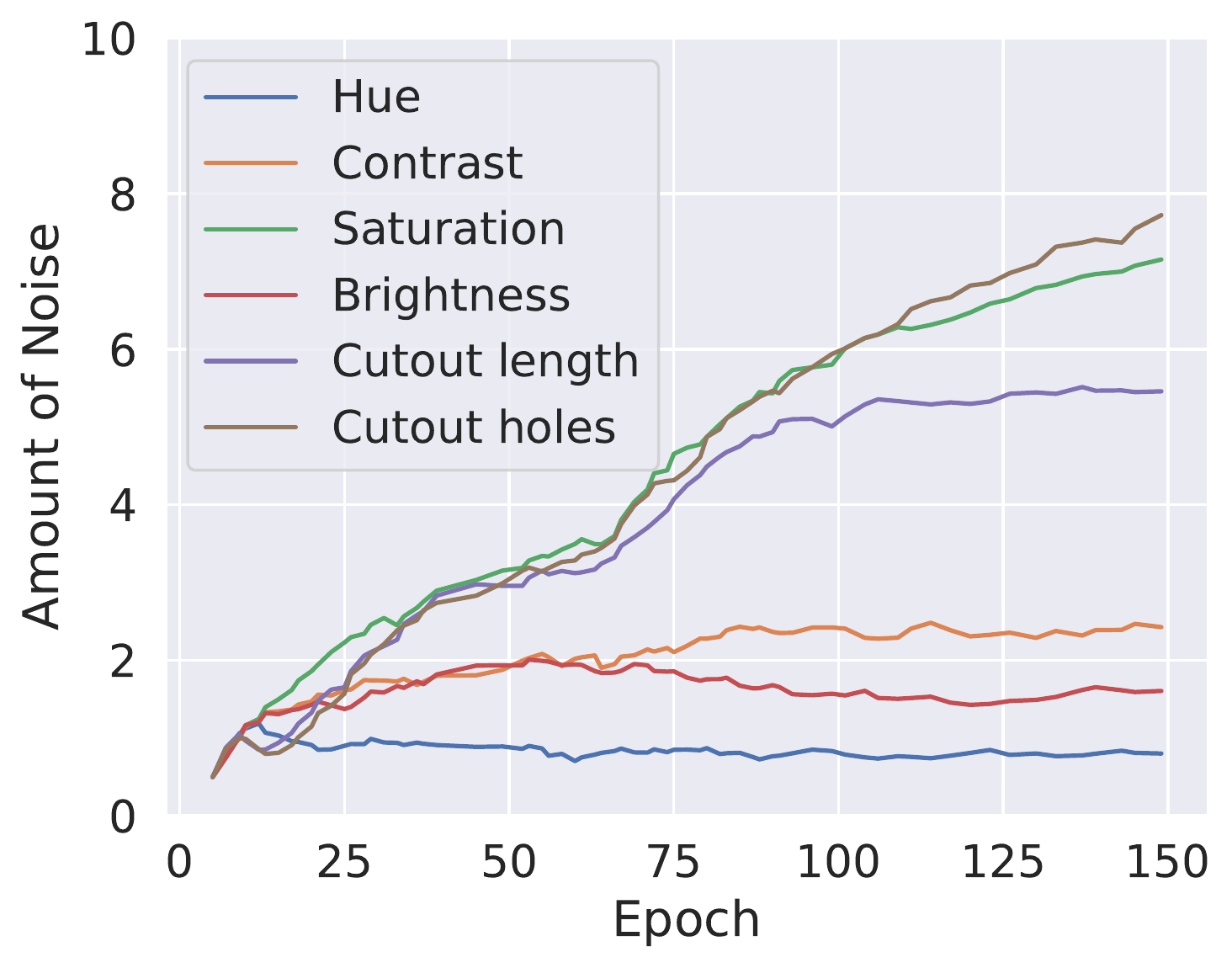}
    \vspace{-5mm}
    \caption{CE-GP-UCB (ours)}
    \label{fig:stn_aug_ours}
\end{subfigure}
\caption{The hyperparameter schedule prescribed by the STN (VGG16 on CIFAR-10).}
\label{fig:stn_augs}
\end{figure}

Figure~\ref{fig:stn_train} shows the training curves (loss and accuracy) under different tuning schedules. We observed that the training curves of STN differ from standard training curves since large amounts of data augmentations are added at the late stage of training which results in a huge performance loss on training data. However, even though the performance is significantly worse on the training set (e.g., 60\% to 80\% training accuracy), the model still performs well on the clean images without augmentations on the validation set and test set (see Figure~\ref{fig:stn_learning_curves}).

Figure~\ref{fig:stn_augs} presents the hyperparameter schedule prescribed by the STN under different tuning schedules. In general, the amount of noise added to the image increases as the training proceeds to alleviate the effects of overfitting. However, we found that CE-GP-UCB results in a slightly different pattern. Specifically, STN decreases the brightness at the late training stage with CE-GP-UCB, however, it increases the brightness as other data augmentations under static or switching tuning schedules.

\subsection{Unsupervised Contrastive Representation Learning}
\paragraph{Implementation Details for SimCLR} Following~\cite{Chen20simclr}, we take ResNet-50~\citep{resnet} as the encoder network, and a 2-layer MLP projection head to project the representation. We trained SimCLR with 64 GPUs on ImageNet100~\citep{Tian19cmc} (100 classes of ImageNet) for 200 epochs and set the batch-size as $64\times56=3584$. To stabilize the training with large batch size, the LARS optimizer is adopted with the learning rate $4.8$. For linear readout evaluation, a linear layer is trained from scratch for 10 epochs. We adopt SGD optimizer with a learning rate of 10 and momentum 0.9 following~\cite{Tian19cmc}.

\paragraph{Implementation Details for CE-GP-UCB}
We integrate CE-GP-UCB algorithm into the popular \texttt{botorch} library~\citep{botorch} for higher efficiency and stable performance. Specifically, we use Matern5/2 and set the forgetting rate as $\epsilon=0.01$. To find all the local optima, we randomly initialize 50 locations and use L-BFGS algorithm~\citep{LiuN89} to find the local optima. Furthermore, we use meanshift~\citep{Cheng95} algorithm to suppress close candidates with a bandwidth of 0.2.
To avoid catastrophic performance loss and misleading signals by some extremely biased decision, we clip the reward signals between $[-2, 2]$ and set the minimal probability to apply the cropping as 0.5. 

\end{document}